\documentclass{article}

\usepackage[dblblindworkshop, final]{neurips_2026}

\workshoptitle{I Can't Believe It's Not Better (ICBINB): Failure Modes of AI in Biology}

\usepackage[utf8]{inputenc} %
\usepackage[T1]{fontenc}    %
\usepackage{hyperref}       %
\usepackage{url}            %
\usepackage{booktabs}       %
\usepackage{amsfonts}       %
\usepackage{amssymb}        %
\usepackage{nicefrac}       %
\usepackage{microtype}      %
\usepackage{xcolor}         %
\usepackage[normalem]{ulem} %
\usepackage{amsmath}
\usepackage{amsthm}
\newtheorem{proposition}{Proposition}
\usepackage{graphicx}
\usepackage{float}    
\title{A Generalisation Signal Need Not Be a Model-Selection Signal}

\author{%
  Aditya Nagarsekar\textsuperscript{1} \quad
  M P Ashish Bhat\textsuperscript{1} \quad
  Aadi Nesarkar\textsuperscript{1} \quad
  Vrishti Godhwani\textsuperscript{1} \\
  \textbf{Rahul Yedida\textsuperscript{2} \quad
  Aditya Challa\textsuperscript{1} \quad
  Danda Sravan\textsuperscript{1} \quad
  Snehanshu Saha\textsuperscript{1,3}} \\[6pt]
  $^{1}$Department of CS\&IS, BITS Pilani, K K Birla Goa Campus \\
  \texttt{\{f20230473,f20231146,f20210967,f20220260,} \\
  \texttt{adityac,dandas,snehanshus\}@goa.bits-pilani.ac.in} \\
  $^{2}$LexisNexis Legal \& Professional \quad $^{3}$Center for AI and Supercomputing, Mahindra University \\
  \texttt{rahul@ryedida.me}, \texttt{snehanshu.saha@mahindrauniversity.edu.in}
}

\begin{document}

\maketitle

\begin{abstract}
Model selection in computational biology often relies on validation data drawn from
the training regime, even when deployment lies outside it. When validation no
longer preserves which model is best, a natural alternative is to rank candidates
using properties of the trained network itself. We test this idea using a novel, 
forward-only proxy motivated by the norm of the Hessian, alongside common
Hessian measures, across molecular property, protein fitness, and drug-response
tasks. Contrary to our hypothesis, geometry does not become more useful as
validation Spearman correlation deteriorates: augmenting validation helps some shifts but significantly harms others. More surprisingly, the proxy still correlates
with generalisation gap on most tasks even when Hessian trace and top-eigenvalue relationships are
weak or reversed, yet this signal does not reliably identify the deployment-best
model. A curvature bound need not preserve cross-model rankings, and low
geometric scores can even favour collapsed predictors. Thus, a generalisation
signal need not be a model-selection signal.
\end{abstract}

\section{Introduction and Background}
 
Machine-learning models in biology are often selected on data that differ from
deployment: validated on known chemical series before use on new scaffolds, on
measured protein variants before searching distant regions of the fitness landscape,
or on familiar batches, donors or perturbations before new regimes. If candidates
$h = \{h_1,\ldots,h_K\}$ are selected by $h_{\mathrm{val}}=\arg\min_{h_i \in h} L_{\mathrm{val}}(h_i)$,
selection stays useful under shift only if validation approximately preserves their
ordering under the unobserved deployment loss $L_{\mathrm{dep}}$. Deployment can be
much harder without breaking this, but a shift that changes which model is best
breaks it, however accurately validation is estimated.

Biological data routinely violate the assumptions behind random held-out evaluation,
through dependence, confounding, preprocessing leakage, and distributional structure
\citep{whalen2022navigating}. Protein-fitness benchmarks increasingly evaluate
position-, wild-type-, and fitness-based shifts that can reverse conclusions drawn
from simpler evaluations \citep{didi2026flip2}; drug-response models deteriorate
sharply on unseen compounds, with careful reevaluation shrinking apparent gains over
simple baselines \citep{bernett2025dreval}; and metric choice alone can change
rankings under unseen chemistry \citep{bisht2026metric}, motivating the partitioning
framework of \citet{fernandezdiaz2024new}. This is distinct from a separate failure:
validation is itself estimated, and repeatedly optimising against a finite set can
overfit at the level of model selection \citep{cawley2010overfitting,schneider2025overtuning}.
 
One method to overcome this is to select by properties of the trained network itself.
Flatness has long been tied to generalisation \citep{hochreiter1997flat,jiang2020fantastic},
and efficient SAM variants have reached molecular graph transformers
\citep{wang2024graphsam}. Yet the evidence that such training-only signals track
\emph{out-of-distribution} behaviour is mixed: no generalisation bound is uniformly
tight across distributions or algorithms \citep{gastpar2023fantastic}, sharper minima
can generalise better OOD \citep{andriushchenko2023modern}, sharpness-aware training
helps OOD generalisation without a full account of why \citep{schapiro2024sharpness},
and geometric measures respond to incidental training choices \citep{kaur2023hessian}.
Curvature has nevertheless been used to guide hyperparameter optimisation directly
\citep{flatnessguidedhpo,strongconvexityhpo}, alongside zero-cost proxies that rank
architectures without full training \citep{lukasik2023evaluation}. Concurrent work
finds that the predictive value of generalisation measures changes under generic
distribution shift \citep{nakai2026revisiting}, but evaluates correlation; we ask
whether such a signal selects better models once validation becomes unreliable,
including when it guides HPO under biological shift.

We separate two questions: whether a training-only signal
\emph{correlates} with generalisation, and whether it is \emph{usable for selecting}
among models that are already trained. Our central finding is that the first does not
imply the second. We introduce a cheap forward-only activation proxy, motivated by a bound on the
Hessian Frobenius norm, and test whether it outperforms exact curvature summaries under
biological distribution shift, where held-out validation is most likely to fail.

\section{Problem and Methodology}

We study model selection under biological distribution shift. For a fixed candidate
pool, we measure \emph{rank transfer} as the Spearman correlation between validation
and deployment losses. We ask whether a training-only signal can replace
or augment validation specifically when this ranking deteriorates. Our primary generalisation signal is a novel, cheap activation-based proxy, motivated by a bound on the norm of the loss Hessian; \mbox{\S\ref{sec:failure}} tests that motivation and finds that it correlates well with generalisation gap, while common curvature measures do not. It is computed, with dropout off and on a fixed subset of training examples, from the post-activation matrix $A_{\ell,\mathcal B}$:
\begin{equation}
P(h)
=
\max_\mathcal B \frac{1}{L}\sum_{\ell=1}^{L}
\frac{\lVert A_{\ell,\mathcal B}\rVert_F^2}{|\mathcal B|\,w_\ell}
\end{equation}
The layer-averaged variant is the pre-registered primary proxy and is what
\texttt{Proxy} denotes throughout; penultimate-layer results are in the appendix. A formal derivation of $P(h)$ relating it to bounds on the loss Hessian norm is given in
Appendix~\mbox{\ref{app:proxy}}. As a mechanistic control, we compute
$\lambda_{\max}(\nabla_\theta^2L_{\mathrm{train}})$ over all weights and biases, without the $L_2$ term, using exact Hessian-vector products
\citep{pearlmutter1994fast}, following PyHessian
\citep{yao2020pyhessian}. We hypothesise that as validation rank transfer deteriorates, model selection should benefit increasingly from curvature if flatter minima generalise better under shift, and our proxy if its generalisation-gap signal transfers to model ranking. Primary comparisons use the original candidate pools. Additional diagnostics are reported in the appendix.

We evaluate Caco2 and Lipophilicity from TDC \citep{huang2021tdc} under random,
scaffold, and constructed mismatch splits; FLIP2 Amylase and Hydrophobic Core
\citep{dallago2021flip,didi2026flip2}; and GDSC2 leave-drug-out
\citep{yang2013genomics,iorio2016landscape}. Molecular tasks use Morgan fingerprints
\citep{rogers2010ecfp}; protein tasks use one-hot sequence features; GDSC2 combines
molecular fingerprints with gene-expression features. Deployment MSE is the primary endpoint for statistical inference; benchmark-native and
rank-based metrics are reported as robustness checks.
A replication each derives a deterministic six-seed tuple from its
index, redrawing the split, the candidate pool, initialisation and batch ordering (full
counts and scheme in Appendix~\ref{app:protocol}). Within each replication we train the
same $K=48$ configurations sampled by scrambled Latin hypercube over learning rate,
weight decay, dropout, width, and depth, with no validation-based early stopping. All
selectors therefore rank the same models within a replication, which is what makes every
comparison paired. Each candidate is a ReLU MLP without normalisation layers, trained with Adam and coupled $L_2$ for 100 epochs. Augmented selectors (Val+X) deploy the candidate with the lowest sum of within-pool ranks on validation loss and X; remaining ties go to the lowest candidate index.
We additionally run 48-trial TPE and HEBO searches
\citep{bergstra2011algorithms,cowenrivers2022hebo}, comparing validation-only with
validation+geometry objectives. Paired tests use Wilcoxon signed-rank statistics
with Holm correction \citep{holm1979simple}; full search ranges and implementation
details are in the appendix.
\section{Results}
\label{sec:outcome}

OOD deployment does not necessarily break model selection. Rank transfer remains
moderate-to-strong across the molecular tasks ($\rho=0.44$--$0.77$), including the
constructed Lipophilicity mismatch ($0.73$), but nearly vanishes on Hydrophobic Core
($-0.10$) and is weak for GDSC2 leave-drug-out ($0.19$). Thus the relevant failure
is not OOD shift itself, but a shift that changes the candidate ordering.
Table~\ref{tab:all_results} shows that training-set geometry does not reliably
rescue this ranking failure.

\begin{table}[H]
\centering
\small
\setlength{\tabcolsep}{0pt}
\caption{Shared-pool deployment performance using each benchmark's native metric, after the post-hoc degeneracy audit. Proxy is the layer-averaged proxy. Mean$\pm$SD over outer replications is reported. Random is expected uniform selection. NDCG uses the full test set with no cutoff. Per-condition results under all metrics are in Appendix~\ref{app:pool}.}
\label{tab:all_results}
\begin{tabular*}{\textwidth}{@{\extracolsep{\fill}}llcccc@{}}
\toprule
Dataset & Metric & Val loss & Proxy & $\lambda_{\max}$ & Random \\
\midrule
Caco2-Wang (random) & MAE $\downarrow$ & 0.351$\pm$0.013 & 0.343$\pm$0.015 & 0.363$\pm$0.022 & 0.358$\pm$0.016 \\
Caco2-Wang (scaffold) & MAE $\downarrow$ & 0.432$\pm$0.047 & 0.421$\pm$0.040 & 0.442$\pm$0.038 & 0.443$\pm$0.039 \\
Lipophilicity (random) & MAE $\downarrow$ & 0.568$\pm$0.013 & 0.571$\pm$0.017 & 0.585$\pm$0.021 & 0.590$\pm$0.012 \\
Lipophilicity (scaffold) & MAE $\downarrow$ & 0.656$\pm$0.025 & 0.654$\pm$0.028 & 0.676$\pm$0.027 & 0.681$\pm$0.023 \\
Lipophilicity (mismatch) & MAE $\downarrow$ & 0.650$\pm$0.022 & 0.649$\pm$0.030 & 0.663$\pm$0.028 & 0.673$\pm$0.025 \\
FLIP2 Amylase & Spearman $\rho\uparrow$ & 0.011$\pm$0.103 & -0.010$\pm$0.115 & -0.062$\pm$0.104 & -0.015$\pm$0.028 \\
 & NDCG $\uparrow$ & 0.860$\pm$0.015 & 0.858$\pm$0.012 & 0.851$\pm$0.015 & 0.856$\pm$0.004 \\
FLIP2 Hydrophobic Core & Spearman $\rho\uparrow$ & 0.320$\pm$0.042 & 0.303$\pm$0.054 & 0.229$\pm$0.151 & 0.312$\pm$0.021 \\
 & NDCG $\uparrow$ & 0.908$\pm$0.014 & 0.904$\pm$0.012 & 0.896$\pm$0.026 & 0.909$\pm$0.004 \\
GDSC2 (leave-drug-out)& Pearson $r\uparrow$ & 0.485$\pm$0.101 & 0.480$\pm$0.099 & 0.477$\pm$0.100 & 0.480$\pm$0.096 \\
\bottomrule
\end{tabular*}

\end{table}

The proxy roughly matches validation on the molecular tasks, where rank
transfer is already moderate-to-strong, but does not improve as validation becomes
less reliable. On Hydrophobic Core it underperforms validation
($0.303$ vs.\ $0.320$ Spearman), while $\lambda_{\max}$ falls to $0.229$.
Under NDCG, FLIP2's own ranking metric, both geometric selectors are
nominally \emph{worse} than blind selection on Hydrophobic Core ($0.904$ and $0.896$
against $0.909$; unadjusted paired Wilcoxon $p=0.016$ and $p=0.022$).
This is genuine selection failure rather than lack of headroom: under MSE,
validation reaches $22.92$ while the within-pool oracle reaches $20.58$. Exploratory GDSC2 shows the same pattern: all tested selectors remain close to random despite
an oracle $8.4\%$ better than blind selection under MSE. A simple baseline using the number of model parameters unexpectedly beats validation on Hydrophobic Core under MSE, MAE, and
Spearman, while Amylase's apparent geometry-based MSE gain is caused by collapsed
predictors (Section~\ref{sec:failure}).
Steering HPO with geometry does not repair the mismatch either, as can be seen in Table~\ref{tab:seq_results}, which tests the stronger setting in which geometry actively
steers the search rather than only re-ranking a fixed pool. Proxy-guided search is largely neutral on the molecular tasks, with its clearest gain on Lipophilicity scaffold (TPE: $0.705$ vs.\ $0.688$; HEBO:
$0.701$ vs.\ $0.685$). The benefit does not appear where validation ranking is
weakest: on Hydrophobic Core TPE is unchanged and HEBO worsens
($0.273$ vs.\ $0.311$), while GDSC2 also deteriorates. The same signal can
help, do nothing, or hurt depending on the shift.

\begin{table}[t]
\centering
\small
\setlength{\tabcolsep}{0pt}
\caption{Sequential HPO deployment Spearman. Validation+Proxy denotes matched multi-objective search on the pre-registered layer-averaged proxy. Spearman is used throughout for a common scale-free comparison. Mean$\pm$SD over outer replications; random search is the reference. Full per-arm results are in Appendix~\ref{app:seq}.}
\label{tab:seq_results}
\begin{tabular*}{\textwidth}{@{\extracolsep{\fill}}lccccc@{}}
\toprule
 & Random & \multicolumn{2}{c}{TPE} & \multicolumn{2}{c}{HEBO} \\
\cmidrule(lr){3-4}\cmidrule(lr){5-6}
Dataset & search & Val & Val+\,Proxy & Val & Val+\,Proxy \\
\midrule
Caco2-Wang (random) & 0.774$\pm$0.024 & 0.774$\pm$0.024 & 0.779$\pm$0.026 & 0.775$\pm$0.025 & 0.777$\pm$0.025 \\
Caco2-Wang (scaffold) & 0.688$\pm$0.085 & 0.684$\pm$0.099 & 0.684$\pm$0.102 & 0.678$\pm$0.089 & 0.698$\pm$0.080 \\
Lipophilicity (random) & 0.750$\pm$0.013 & 0.749$\pm$0.016 & 0.754$\pm$0.016 & 0.746$\pm$0.024 & 0.752$\pm$0.013 \\
Lipophilicity (scaffold) & 0.692$\pm$0.031 & 0.688$\pm$0.036 & 0.705$\pm$0.027 & 0.685$\pm$0.021 & 0.701$\pm$0.025 \\
Lipophilicity (mismatch) & 0.704$\pm$0.027 & 0.711$\pm$0.028 & 0.713$\pm$0.031 & 0.707$\pm$0.028 & 0.711$\pm$0.031 \\
FLIP2 Amylase & -0.007$\pm$0.081 & -0.053$\pm$0.092 & -0.054$\pm$0.096 & -0.028$\pm$0.102 & -0.043$\pm$0.079 \\
FLIP2 Hydrophobic Core & 0.336$\pm$0.063 & 0.300$\pm$0.046 & 0.302$\pm$0.042 & 0.311$\pm$0.035 & 0.273$\pm$0.036 \\
GDSC2 (leave-drug-out) & 0.432$\pm$0.084 & 0.426$\pm$0.097 & 0.414$\pm$0.074 & 0.429$\pm$0.087 & 0.412$\pm$0.072 \\
\bottomrule
\end{tabular*}
\end{table}

Under the MSE endpoint, on the original unfiltered pools, the primary
comparison (proxy versus validation, Holm-corrected across the four confirmatory
conditions) is significant on exactly one: Amylase, where the proxy improves deployment
MSE by $1.39$ and wins all 20 outer replications ($p_{\mathrm{Holm}}<0.001$).
Section~\ref{sec:failure} shows this is a degeneracy artifact; the other three are
non-significant ($p_{\mathrm{Holm}}=0.328$--$0.407$). Augmenting validation with the
proxy is significant in \emph{opposite directions}: it improves three confirmatory
conditions ($p_{\mathrm{Holm}}=0.027$--$0.047$) but is significantly \emph{worse} on
Hydrophobic Core ($\Delta=+0.73$, $p_{\mathrm{Holm}}<0.001$), the condition with the
weakest rank transfer, and therefore the one most in need of rescue. Repeating the
analysis after the post-hoc degeneracy audit removes the Amylase win
($p_{\mathrm{Holm}}=0.108$) and leaves the Hydrophobic Core harm intact
($p_{\mathrm{Holm}}=0.002$). After the audit, augmentation still helps significantly only on Lipophilicity mismatch ($p_{\mathrm{Holm}}=0.038$), while Caco2 scaffold and Amylase move to $0.094$. No proxy-guided method significantly beats
validation-driven TPE in the pre-registered sequential family, while one
curvature-guided arm is significantly worse. These search results concern multi-objective search combined with a rank-sum deploy rule. Full paired results, filtered and
unfiltered, are in Appendix~\ref{app:pool}.

Thus, correlation with generalisation gap is not sufficient for model selection:
a generalisation signal need not be a model-selection signal.

\section{Failure analysis}
\label{sec:failure}

\paragraph{The proxy carries generalisation signal, but not standard curvature.}
After controlling for architecture and learning rate, the proxy retains a modest
association with generalisation gap on molecular tasks
($\rho\approx0.16$--$0.35$), but not on Hydrophobic Core ($-0.30$) or GDSC2
($0.01$), while $\lambda_{\max}$ and $\operatorname{tr}(H)$ are weak or negative.
Its marginal anti-correlation with the full-network $\lambda_{\max}$ ($-0.40$ on
Lipo mismatch, $-0.55$ on Hydrophobic Core) shrinks to $-0.12$ and $-0.14$ under the
same controls. On Lipo mismatch it anti-correlates with Hessian trace ($-0.61$) but
correlates with relative flatness ($+0.72$)
\citep{kaur2023hessian,cohen2021,petzka2021relative}, so it captures information
these scalar Hessian summaries miss.

\paragraph{The preferred model changes under extrapolation.}
On Hydrophobic Core, the deployment oracle uses a $2.05\times$ larger learning rate,
$4\times$ greater width and less regularisation than the validation-selected model.
The shift changes which optimisation/capacity regime is preferable, not merely how
hard the examples are, consistent with training on below-median variants that
truncate the observed target range while deployment requires extrapolation to
higher-fitness variants.

\paragraph{Curvature can become null or inverted.}
Curvature shows little relationship with generalisation on the molecular tasks, and
Hydrophobic Core reverses the expected trend:
$\rho(\operatorname{tr}(H),L_{\mathrm{dep}}-L_{\mathrm{train}})\approx-0.52$, so
greater curvature accompanies better extrapolation, and $\lambda_{\max}$ is the
weakest shared-pool selector. As deployment loss is training loss $+$ generalisation gap,
even accurate gap signals can misrank candidates whose training losses differ,
so correlation with the gap is insufficient for selection.

\paragraph{Geometry can reward collapse.}
On Amylase, the proxy improves median deployment MSE by $1.39$ and wins all 20
replications, yet its median pick has a dead-unit fraction of $0.998$ and near-zero
prediction variance. It matches the training-mean MSE ($1.145$), with Spearman
undefined in 14 of 20 replications: lower error without useful variant ranking.
Collapse drives the proxy toward zero and $\lambda_{\max}$ to the output-bias floor,
so low geometric scores can signal a network that has learned little. The audit
removes this win through its dead-unit criterion; removing only constant predictors
does not (Appendix~\ref{app:degen}). All models were trained sufficiently for comparison, as can be seen from Appendix \mbox{\ref{app:sanity}}.

\section{Conclusion}

Training-set measures can carry generalisation signal without reliably selecting the
best deployment model. Hessian summaries correlate weakly or negatively with
generalisation gap, while our proxy correlates positively on most tasks; yet across
biological shifts the proxy can help, fail, harm selection, or favour degenerate
predictors. This shows no consistent selection gain, not that such measures carry no
information. Until a measure beats validation at choosing the deployed model, it is
best treated as an auxiliary diagnostic: studies should report rank transfer, compare
against validation and simple baselines, and audit selections for collapse. Our
conclusions are limited to the models, representations, selection rules, and
geometric criteria studied here.

\section*{Use of large language models}
Large language models were used to assist with code development, literature search,
and language editing. All experimental results, statistical analyses, and reported
numerical values were generated and verified by the authors. The authors reviewed
and take responsibility for all content in the manuscript.

\section*{Reproducibility statement}
Each outer run derives six seeds deterministically from its index, governing the data
split, the 48-candidate Latin hypercube, initialisation, batch ordering, the stochastic
proxy estimators, and the HPO sampler. Every trained candidate is written atomically as
a single row, and test predictions are retained for all candidates, so every post-hoc
diagnostic reuses the original trained models rather than retraining after observing an
outcome. Full protocol, search ranges, and per-condition results are in the appendix. Code is available at \url{https://github.com/AdityaNagarsekar/A-Generalisation-Signal-Need-Not-Be-a-Model-Selection-Signal}.

\bibliographystyle{plainnat}
\bibliography{references}

\appendix

\section{Limitations and scope}

Our conclusions concern the tested MLP architectures, fixed molecular and sequence
representations, and the geometric criteria evaluated here. They do not establish
that all curvature- or representation-based selectors fail under biological
distribution shift, nor that the same behaviour will hold for pretrained biological
foundation models or learned representations.

The negative results also concern the selection rules we tested: deploying the lowest
score, deploying the lowest rank sum of validation loss and a signal, and multi-objective
search followed by that rank-sum rule. Deployment loss is training loss plus
generalisation gap, so a signal that tracks the gap does not by itself estimate
deployment loss, and it can misrank candidates whose training losses differ even when
its estimate of the gap is accurate. We did not test rules that combine a geometric
signal with training loss, or that let the direction of the signal vary by condition.
Where the proxy's association with the gap is negative, as on Hydrophobic Core
(Table~\ref{tab:app-geom}), a rule that deploys the lowest score is misdirected by
construction. Our results show no consistent improvement in selection; they do not show
that these measures carry no information about deployment (Appendix~\ref{app:abl}).

The experiments also focus on supervised regression settings in which candidate
models are selected from controlled hyperparameter sweeps. Other deployment settings,
including uncertainty-aware selection, ensembling, active learning, or adaptation
after shift, may behave differently.

A natural next question is whether shift-aware or representation-aware signals can
identify the new deployment ordering when validation rank transfer collapses, rather
than relying on generic training-set geometry.

\section{Protocol, reproducibility, and deviations from the pre-registered plan}
\label{app:protocol}

\paragraph{Architecture.} A plain feed-forward MLP: \texttt{depth} hidden blocks of
\texttt{width} units, each Linear $\rightarrow$ ReLU $\rightarrow$ Dropout, then a
linear head to a scalar target. \textbf{No normalisation layers are used.} This matters
for interpretation: without BatchNorm to re-centre pre-activations, permanently
inactive ReLU units are considerably more common, which is the direct cause of the
degeneracy audited in \S\ref{app:degen}. It also means our setting is not the one in
which \citet{kaur2023hessian} observe BatchNorm improving generalisation without
reducing $\lambda_{\max}$.

\paragraph{Optimisation.} Adam with \emph{coupled} $L_2$ weight decay (the
\texttt{weight\_decay} argument, not AdamW's decoupled form), batch size 64, MSE on
standardised targets, 100 fixed epochs. No early stopping and no learning-rate
schedule. Early stopping is deliberately excluded: it is itself a validation-based
decision and would give validation a second selection opportunity the training-only
signals do not get.

\paragraph{Search space.} Each outer replication draws $K=48$ configurations from a
scrambled Latin hypercube over learning rate $\eta\in[10^{-4},3\times10^{-3}]$
(log-uniform), weight decay (exactly $20\%$ of candidates zero, remainder log-uniform
on $[10^{-6},10^{-2}]$), dropout $\in[0,0.5]$, width $\in\{64,128,256,512\}$ and depth
$\in\{1,2,3,4\}$. Width and depth are drawn stratified, so each level appears a fixed
number of times per pool.

\paragraph{Signals and curvature.} All signals are computed after training, with
dropout disabled, on a fixed subset of the training set: $\min(1024,n_{\text{train}})$
examples rounded down to a multiple of 64, drawn once per replication with the proxy
seed and split without shuffling into consecutive batches of 64. The proxy takes the
maximum over these batches (Eq.~\eqref{eq:final-proxy-2}). The dead-unit fraction used in
the audit is the share of hidden units, averaged over layers, that output zero on every
example of this subset. $\lambda_{\max}$ and the Hessian trace are computed on the same
subset for the training MSE alone, without the $L_2$ term, with respect to every
trainable parameter: the weights and biases of all layers, including the output bias.
$\lambda_{\max}$ uses power iteration with exact Hessian-vector products (up to 30
iterations, tolerance $10^{-3}$); the trace uses Hutchinson's estimator with 30
Rademacher probes. Because the output bias enters every prediction additively, its
diagonal Hessian entry under mean squared error is exactly 2, so $\lambda_{\max}\ge2$ for
every network; collapsed networks sit at this floor rather than at zero
(Appendix~\ref{app:degen}).

\paragraph{Selection rules and ties.} Every single-signal selector deploys the candidate
with the lowest score. Val+X ranks the pool on validation MSE and on signal X, using
average ranks for ties, and deploys the candidate with the lowest rank sum; the
multi-objective search arms apply the same rule to all observed trials
(Appendix~\ref{app:seq}). Any remaining tie goes to the lowest candidate index, which is
the Latin-hypercube draw order and carries no information about the hyperparameters.
Ties matter mainly for \#params: the smallest architecture in a pool appears between one
and five times, so \#params deploys the first-drawn of these networks, whose learning
rate, dropout and weight decay are effectively random.

\paragraph{Names.} Table~\ref{tab:app-names} maps the selector names used throughout
the paper to the identifiers in the code release. Figures label Lipophilicity (mismatch)
as \emph{Cond7} and Hydrophobic Core as \emph{Hydro}.

\begin{table}[htbp]\centering\scriptsize\setlength{\tabcolsep}{4pt}
\caption{Selector names, the model each deploys, and the identifier in the code release.}
\label{tab:app-names}
\begin{tabular}{lll}\toprule
name & deploys the candidate with the lowest & code \\\midrule
Val & validation MSE & \texttt{val} \\
Proxy & layer-averaged activation proxy, Eq.~\eqref{eq:final-proxy-2} & \texttt{fg\_legacy} \\
Proxy (pen.) & penultimate-layer proxy, Eq.~\eqref{eq:final-proxy-1} & \texttt{fg\_penult} \\
$\lambda_{\max}$ & top Hessian eigenvalue of the training MSE & \texttt{hess\_top} \\
Val+X & rank sum of validation MSE and signal X & \texttt{val+legacy}, \texttt{val+penult}, \texttt{val+hess} \\
Train & training MSE & \texttt{train\_mse} \\
\#params & parameter count & \texttt{n\_params} \\
Random & none; expected deployment loss of a uniform pick & \texttt{random(E)} \\
Oracle & deployment MSE; a bound, not a selector & \texttt{oracle} \\
\bottomrule\end{tabular}\end{table}

\paragraph{What an outer replication is.} Not a re-initialisation. Replication $r$
deterministically fixes six seeds as $10000r+1,\dots,10000r+6$, governing respectively
the train/validation/test split, \textbf{the 48-candidate hypercube itself}, weight
initialisation (offset by candidate index), batch composition and ordering, the
stochastic proxy estimators, and the HPO sampler. Two consequences matter for reading
the reported spreads. The candidate pool is redrawn every replication, so ``every
selector ranks the same 48 models'' holds \emph{within} a replication, which is what
makes each comparison paired, but not across them, and no result is a property of one
hypercube draw. The test set is redrawn as well on every condition \emph{except} the
two FLIP2 splits, whose partitions are deterministic; FLIP2 spreads therefore reflect
only training stochasticity and the pool redraw.

\paragraph{Replication counts.} 20 for both Caco2 conditions, Lipophilicity mismatch,
Amylase and GDSC2; 30 for Lipophilicity random and scaffold and Hydrophobic Core; 10
per arm for the sequential searches.

\paragraph{Protocol lock.} The conditions, replication counts, search space, training
settings, proxy subset and signals were frozen in a protocol file before the first
confirmatory run. It names the layer-averaged proxy as the primary signal and the
penultimate-layer proxy and $\lambda_{\max}$ as secondary signals, and it pre-registers
the sequential comparison of proxy-guided TPE against validation-only TPE on
Lipophilicity mismatch and Hydrophobic Core. Its SHA-256 hash is recorded on every
result row in the code release.

\paragraph{Statistical tests.} Every comparison is paired within replication. We use the
two-sided Wilcoxon signed-rank test (SciPy 1.15.3). Zero paired differences, common when
a rank-sum selector deploys the same model as validation, are discarded, following
Wilcoxon's original method. SciPy then chooses the null distribution from the data:
exact when no zero or tied absolute difference is present; otherwise a deterministic
permutation test for at most 13 pairs, as in the sequential searches, and the normal
approximation without continuity correction for the 20 or 30 pairs of the shared-pool
analyses. Recomputing with zeros removed beforehand and the exact null distribution
changes no significance decision in Table~\ref{tab:app-prereg}.
$p$-values below $0.001$ are reported as $<0.001$. Holm correction is applied within
four separate families:
\begin{enumerate}\setlength{\itemsep}{0pt}
\item \emph{Confirmatory} (Table~\ref{tab:app-prereg}): Proxy against Val across the four
confirmatory conditions, and separately Val+Proxy against Val across the same four. The
unfiltered (primary) and audited analyses are corrected separately.
\item \emph{Per-condition} (Tables~\ref{tab:app-pool-caco2-random}--\ref{tab:app-pool-gdsc-drug}):
the ten contrasts against Val within each condition, including Oracle and Random.
These are descriptive.
\item \emph{Metric robustness} (Table~\ref{tab:app-metric}): the eight selector contrasts
against Val within each condition and metric, excluding Oracle and Random.
\item \emph{Sequential search} (Table~\ref{tab:app-seq}): the eight arms against
validation-only TPE within each condition.
\end{enumerate}
The NDCG $p$-values quoted in \S\ref{sec:outcome} are unadjusted and use the audited
pools.

\paragraph{Deviations from the pre-registered plan.} Stated explicitly because they
affect how the results should be read.
\begin{enumerate}\setlength{\itemsep}{0pt}
\item \textbf{The degeneracy audit is post-hoc.} The plan pre-specified
\emph{recording} dead-unit and prediction-spread diagnostics and excluding trials that
fail training outright, but no exclusion rule. After diagnosing the Amylase result we
excluded a candidate if (i) the standard deviation of its test-set predictions is below
$10^{-6}$, making it a constant predictor, or (ii) more than half of its hidden ReLU
units output zero on every example of the training proxy subset. A replication would be
dropped if fewer than five candidates remained; none was. The threshold of one half
marks networks that have lost most of their capacity; it was set once, after seeing the
Amylase result, and not tuned. Criterion (ii) is the one that matters: it also removes
partially inactive networks, and on its own it reproduces the audited Amylase result,
whereas criterion (i) on its own leaves the proxy's advantage intact
(Table~\ref{tab:app-collapse-filters}). Criterion (i) reads deployment predictions, so
the audit is a diagnostic, not a rule a practitioner could apply before deployment.
Evaluating (i) on training-set predictions instead changes the outcome for 4 of the
59{,}712 stored candidates, all on Amylase, and leaves Table~\ref{tab:app-prereg}
unchanged. Primary inference in \S\ref{sec:outcome} therefore uses the unfiltered pools;
the audited analysis is reported alongside as a diagnostic, and each appendix table and
figure states which pools it uses.
\item \textbf{The confirmatory family is four conditions.} The pre-registered Holm
family is $\{$Lipophilicity mismatch, Caco2 scaffold, Amylase, Hydrophobic Core$\}$;
Lipophilicity \emph{scaffold} was designated descriptive on the grounds that its
validation split is itself scaffold-OOD. Caco2 scaffold is built by the same TDC
scaffold split and shares that property, so the family's composition is a design choice
rather than a property of the splits. Adding Lipophilicity scaffold as a fifth member
leaves every Proxy decision unchanged but weakens Val+Proxy: on the unfiltered pools it
remains significant only on Amylase ($p_{\mathrm{Holm}}=0.036$) and, as a harm, on
Hydrophobic Core ($<0.001$), with Lipophilicity mismatch at $0.069$ and Caco2 scaffold at
$0.094$; on the audited pools only the Hydrophobic Core harm remains ($0.002$). The
designation also bears on our one positive search result, which sits on Lipophilicity
scaffold and is additionally post-hoc within the sequential family.
\item \textbf{Analyses added after unblinding.} GDSC2, the shift-severity sweep, the
penultimate-layer search arms, sequential search on the six conditions beyond
Lipophilicity mismatch and Hydrophobic Core, and the Hessian-trace diagnostics. The
analyses added in response to review (Tables~\ref{tab:app-collapse}
and~\ref{tab:app-collapse-filters}, the intervals in Table~\ref{tab:app-prereg}, and the
sensitivity analysis in item 2) reuse the stored models and are reproduced by
\texttt{scripts\_icbinb/review\_diagnostics.py}. All are labelled exploratory and never
pooled with confirmatory tests.
\end{enumerate}

\section{Datasets and split construction}
\label{app:data}

\paragraph{Molecular.} Caco2-Wang permeability and Lipophilicity from TDC
\citep{huang2021tdc}, as 2048-bit radius-2 Morgan fingerprints
\citep{rogers2010ecfp}. Random and Bemis--Murcko scaffold splits at 70/10/20 give
matched interpolation and structural-shift conditions on identical data and
featurisation.

\paragraph{The mismatch condition.} A standard scaffold split does not create the
failure mode of interest, because validation and test both contain unseen scaffolds -
validation is already OOD and can still rank OOD candidates. We therefore keep the
scaffold-disjoint test set unchanged and randomly repartition only the original
train--validation pool. Validation becomes in-distribution with respect to training
while deployment remains scaffold-disjoint. This is the condition the study was
designed around and the one where validation \emph{ought} to fail.

\paragraph{Protein fitness.} FLIP2 Amylase (close-to-far, a position split) and
Hydrophobic Core (low-to-high, a fitness split), benchmark-provided and unmodified
\citep{dallago2021flip,didi2026flip2}, one-hot encoded and padded to fixed length. Our
Hydrophobic Core split reproduces the published one exactly (24{,}935 variants; median
boundary $-3.206$ against the stated $-3.21$) and Amylase at 3{,}706 variants. The
target ranges do not overlap: the highest training fitness is below the lowest test
fitness, so every deployment variant is more functional than anything seen in training.

\paragraph{Drug response.} GDSC2 with whole compounds held out
\citep{yang2013genomics,iorio2016landscape}: validation measurements involve compounds
also present in training, while test compounds are chemically novel. Features
concatenate a drug fingerprint with cell-line expression, with variance filtering and
standardisation fitted on training data only.

\section{Derivation and bound}
\label{app:proxy}

\subsection{Setup and notation}

Let $f_\theta$ be an $L$-layer feedforward network with a linear output layer,
trained with mean squared error. The hidden nonlinearity plays no role in what
follows, only that the output layer is affine. Let $m$ denote the number of
training examples, $a_i:=a^{[L-1]}(x_i)\in\mathbb{R}^d$ the penultimate
activation for training example $i$, and
$W^{[L]}\in\mathbb{R}^{k\times d}$ the final affine layer. The loss is
\[
E(W^{[L]})
=
\frac{1}{2m}\sum_{i=1}^m
\left\|W^{[L]}a_i-y_i\right\|_2^2.
\]

For a mini-batch $\mathcal B$, let
$A_{\mathcal B}\in\mathbb{R}^{|\mathcal B|\times d}$ denote the matrix whose
rows are the corresponding penultimate activations.

\subsection{Statement}

\begin{proposition}
\label{prop:bound}

The Hessian with respect to the final-layer weights is
\[
\nabla^2_{W^{[L]}}E
=
\frac{1}{m}\sum_{i=1}^m
I_k\otimes(a_i a_i^\top),
\]
and its Frobenius norm satisfies
\begin{equation}
\label{eq:hessian-frob-global}
\frac{1}{m}\max_{1\le i\le m}\|a_i\|_2^2
\le
\left\|\nabla^2_{W^{[L]}}E\right\|_F
\le
\frac{\sqrt{k}}{m}
\sum_{i=1}^m\|a_i\|_2^2.
\end{equation}

If the training set is partitioned into mini-batches
$\mathcal B_1,\ldots,\mathcal B_T$, define
\[
\mu_t^2
:=
\frac{1}{|\mathcal B_t|}
\sum_{i\in\mathcal B_t}\|a_i\|_2^2
=
\frac{\|A_{\mathcal B_t}\|_F^2}{|\mathcal B_t|}.
\]
Then
\begin{equation}
\label{eq:hessian-frob-batch}
\frac{1}{m}\max_{1\le t\le T}\mu_t^2
\le
\left\|\nabla^2_{W^{[L]}}E\right\|_F
\le
\sqrt{k}\max_{1\le t\le T}\mu_t^2.
\end{equation}

\end{proposition}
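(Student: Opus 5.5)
The plan is to compute the Hessian explicitly by vectorising the final layer, and then reduce every bound to elementary facts about the single positive semidefinite matrix $S:=\sum_{i=1}^m a_i a_i^\top$.

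\textbf{Step 1: the Hessian.} I will vectorise $W^{[L]}$ row-wise, writing $w=\operatorname{vec}(W^{[L]})\in\mathbb{R}^{kd}$, so that $W^{[L]}a_i=(I_k\otimes a_i^\top)\,w$. Each summand of $E$ is then a quadratic in $w$ with linear map $J_i:=I_k\otimes a_i^\top$. Hence
\[
\nabla^2_w E=\frac{1}{m}\sum_i J_i^\top J_i
=\frac{1}{m}\sum_i (I_k\otimes a_i)(I_k\otimes a_i^\top)
=\frac{1}{m}\sum_i I_k\otimes a_ia_i^\top ,
\]
by the mixed-product rule for Kronecker products. This equals $\frac1m\, I_k\otimes S$. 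The only delicate point is fixing the vectorisation convention consistently. Column-wise stacking would give $S\otimes I_k$ instead, which is a permutation-similar matrix with the same Frobenius norm, so the bounds are unaffected. I expect this bookkeeping to be the main (mild) obstacle; everything after it is a one-line inequality.

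\textbf{Step 2: the global bounds (the first display in Proposition~\ref{prop:bound}).} Since $\|I_k\otimes S\|_F=\sqrt{k}\,\|S\|_F$, the Hessian norm equals $\frac{\sqrt k}{m}\|S\|_F$.

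For the upper bound, I apply the triangle inequality together with $\|a_ia_i^\top\|_F=\|a_i\|_2^2$. This gives $\|S\|_F\le\sum_i\|a_i\|_2^2$.

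For the lower bound, I use $\sqrt{k}\ge 1$ and $\|S\|_F\ge\|S\|_{\mathrm{op}}$. Then, for the index $j$ maximising $\|a_j\|_2$ (assuming $a_j\neq0$, the zero case being trivial), the Rayleigh quotient gives
\[
\|S\|_{\mathrm{op}}\ge \frac{a_j^\top S a_j}{\|a_j\|_2^2}=\frac{\sum_i (a_i^\top a_j)^2}{\|a_j\|_2^2}\ge \|a_j\|_2^2 ,
\]
where the last step keeps only the $i=j$ term. This uses positive semidefiniteness of every summand.

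\textbf{Step 3: the mini-batch bounds (the second display in Proposition~\ref{prop:bound}).} Both bounds follow by comparing the quantities from Step 2 with the $\mu_t^2$.

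For the upper bound, since the batches partition the data,
\[
\frac1m\sum_i\|a_i\|_2^2=\sum_t\frac{|\mathcal B_t|}{m}\,\mu_t^2 ,
\]
which is a convex combination of the $\mu_t^2$. It is therefore at most $\max_t\mu_t^2$.

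For the lower bound, each $\mu_t^2$ is an average over $\mathcal B_t$, so $\mu_t^2\le\max_{i\in\mathcal B_t}\|a_i\|_2^2\le\max_i\|a_i\|_2^2$. Hence $\frac1m\max_t\mu_t^2$ is below the global lower bound already established in Step 2. Chaining these comparisons with the bounds of Step 2 yields the second display and completes the proof.
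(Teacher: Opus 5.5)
Your proposal is correct and follows the paper's proof essentially step for step. It uses the Kronecker-norm identity with the triangle inequality for the upper bound, a lower bound through the spectral norm using positive semidefiniteness of the summands, and the same average-versus-maximum and convex-combination comparisons for the mini-batch bounds. The differences are cosmetic. You derive the Hessian via row-wise vectorisation where the paper merely states it. You also work with $S$ and a Rayleigh quotient rather than the Loewner bound $\nabla^2_{W^{[L]}}E\succeq\frac{1}{m}I_k\otimes a_ja_j^\top$; because $\|I_k\otimes S\|_F=\sqrt{k}\,\|S\|_F$, your route shows the lower bound actually holds with an extra factor $\sqrt{k}$, which you then discard.
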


Because $a_i$ does not involve $W^{[L]}$, the loss is an exact quadratic in
$W^{[L]}$ and the Hessian above is constant in $W^{[L]}$, depending on the
network only through its activations.

\subsection{Proof}

\begin{proof}

The Hessian with respect to the final-layer weights is
\[
\nabla^2_{W^{[L]}}E
=
\frac{1}{m}\sum_{i=1}^m
I_k\otimes(a_i a_i^\top).
\]

Using
\[
\|I_k\otimes M\|_F
=
\sqrt{k}\,\|M\|_F,
\qquad
\|a_i a_i^\top\|_F
=
\|a_i\|_2^2,
\]
and the triangle inequality,
\[
\left\|\nabla^2_{W^{[L]}}E\right\|_F
\le
\frac{\sqrt{k}}{m}
\sum_{i=1}^m\|a_i\|_2^2.
\]

Since $\|M\|_F\ge\|M\|_2$ for any matrix,
\[\|\nabla^2_{W^{[L]}}E\|_F\ge\|\nabla^2_{W^{[L]}}E\|_2 .\]
Each summand $I_k\otimes(a_ia_i^\top)$ is positive semidefinite, so
$\nabla^2_{W^{[L]}}E\succeq\tfrac1m I_k\otimes(a_ja_j^\top)$ for every $j$, hence
\[\|\nabla^2_{W^{[L]}}E\|_2=\lambda_{\max}\!\big(\nabla^2_{W^{[L]}}E\big)\ge\frac1m\max_i\|a_i\|_2^2 .\]
From the spectral-norm bound,
\[
\left\|\nabla^2_{W^{[L]}}E\right\|_2
\ge
\frac{1}{m}\max_{1\le i\le m}\|a_i\|_2^2.
\]
Thus,
\[
\frac{1}{m}\max_{1\le i\le m}\|a_i\|_2^2
\le
\left\|\nabla^2_{W^{[L]}}E\right\|_F
\le
\frac{\sqrt{k}}{m}
\sum_{i=1}^m\|a_i\|_2^2,
\]
which proves Eq.~\eqref{eq:hessian-frob-global}.

Now partition the training set into mini-batches
$\mathcal B_1,\ldots,\mathcal B_T$ such that
\[
\bigcup_{t=1}^T\mathcal B_t=\{1,\ldots,m\},
\qquad
\mathcal B_s\cap\mathcal B_t=\varnothing
\quad(s\neq t).
\]
For each batch,
\[
\mu_t^2
=
\frac{1}{|\mathcal B_t|}
\sum_{i\in\mathcal B_t}\|a_i\|_2^2
=
\frac{\|A_{\mathcal B_t}\|_F^2}{|\mathcal B_t|}.
\]

For every $t$,
\[
\max_{i\in\mathcal B_t}\|a_i\|_2^2
\ge
\frac{1}{|\mathcal B_t|}
\sum_{i\in\mathcal B_t}\|a_i\|_2^2
=
\mu_t^2.
\]
Because the batches form a partition,
\[
\max_{1\le i\le m}\|a_i\|_2^2
=
\max_{1\le t\le T}
\max_{i\in\mathcal B_t}\|a_i\|_2^2.
\]
Therefore,
\[
\max_{1\le i\le m}\|a_i\|_2^2
\ge
\max_{1\le t\le T}\mu_t^2.
\]
Substituting into the lower bound in
Eq.~\eqref{eq:hessian-frob-global} gives
\[
\left\|\nabla^2_{W^{[L]}}E\right\|_F
\ge
\frac{1}{m}\max_{1\le t\le T}\mu_t^2.
\]

For the upper bound, decompose the sum over the same partition:
\[
\frac{1}{m}\sum_{i=1}^m\|a_i\|_2^2
=
\frac{1}{m}
\sum_{t=1}^T
|\mathcal B_t|\mu_t^2.
\]
Define
\[
w_t:=\frac{|\mathcal B_t|}{m},
\qquad
w_t\ge0,
\qquad
\sum_{t=1}^T w_t=1.
\]
Then
\[
\frac{1}{m}\sum_{i=1}^m\|a_i\|_2^2
=
\sum_{t=1}^T w_t\mu_t^2
\le
\max_{1\le t\le T}\mu_t^2.
\]
Substituting into the upper bound in
Eq.~\eqref{eq:hessian-frob-global} gives
\[
\left\|\nabla^2_{W^{[L]}}E\right\|_F
\le
\sqrt{k}\max_{1\le t\le T}\mu_t^2.
\]
Combining the two inequalities proves
Eq.~\eqref{eq:hessian-frob-batch}.

Finally, define
\[
R(\text{architecture},B)
:=
\max_{\mathcal B}
\frac{\|A_{\mathcal B}\|_F^2}{|\mathcal B|}.
\]
Since
\[
R(\text{architecture},B)
=
\max_{1\le t\le T}\mu_t^2,
\]
Eq.~\eqref{eq:hessian-frob-batch} gives
\begin{equation}
\label{eq:hessian-R-bound}
\frac{1}{m}R(\text{architecture},B)
\le
\left\|\nabla^2_{W^{[L]}}E\right\|_F
\le
\sqrt{k}\,R(\text{architecture},B).
\end{equation}

\end{proof}

\subsection{Why a valid bound need not give a usable ranking}
\label{app:proxy-caveat}

Proposition~\ref{prop:bound} bounds the Frobenius norm of the Hessian
pointwise using statistics of the penultimate activations. The bound is valid
for each model individually, but it does not imply that the induced quantity
preserves the ordering of different models. Hyperparameter search compares
models with different learning rates, depths, and widths, all of which can
change the activations. Consequently, a correct pointwise bound need not
produce a correct ranking across models.

Section~\ref{sec:failure} is consistent with this caveat but does not test it
directly, because the quantities it compares are not those of
Proposition~\ref{prop:bound}. The proposition concerns the Hessian with respect to the
final-layer weights and the unnormalised penultimate activation energy.
Section~\ref{sec:failure} compares the width-normalised, layer-averaged proxy with
$\lambda_{\max}$ and $\operatorname{tr}(H)$ of the full network, taken over all weights
and biases (Appendix~\ref{app:protocol}). The anti-correlation reported there therefore
does not test whether the statistic in Proposition~\ref{prop:bound} preserves the
ordering of final-layer curvature across models; we did not measure the final-layer
Hessian separately.

\subsection{From the bound to the proxy used in experiments}
\label{app:proxy-steps}

Proposition~\ref{prop:bound} motivates the proxy but does not cover it. Two steps go
beyond the bound. First, to compare architectures of different hidden width $w$, we
divide $R(\text{architecture},B)$ by the width:
\begin{equation}
\label{eq:final-proxy-1}
P(h)
=
\max_\mathcal B
\frac{\lVert A_{\mathcal B}\rVert_F^2}{|\mathcal B|\,w}
\end{equation}

This width normalisation is not part of the proposition, and it can change the
ordering of models of different width. Second, we average the normalised quantity over
all hidden layers:

\begin{equation}
\label{eq:final-proxy-2}
P(h)
=
\max_\mathcal B \frac{1}{L}\sum_{\ell=1}^{L}
\frac{\lVert A_{\ell,\mathcal B}\rVert_F^2}{|\mathcal B|\,w_\ell}
\end{equation}
Layer averaging is likewise outside the proposition, which concerns only the
penultimate layer. Across candidates, the penultimate-layer form~\eqref{eq:final-proxy-1}
and the layer-averaged form~\eqref{eq:final-proxy-2} are strongly correlated
($\rho=+0.95$, Appendix~\ref{app:pool}); this addresses the averaging step but not width
normalisation. Form~\eqref{eq:final-proxy-2} was fixed as the primary signal, and
form~\eqref{eq:final-proxy-1} as a secondary signal, in the frozen protocol before any
confirmatory run (Appendix~\ref{app:protocol}); neither choice was made by examining
confirmatory results.

\section{Complete shared-pool results}
\label{app:pool}

Section~\ref{sec:outcome} reports a compressed summary; this section gives every
selector on every condition so a reader can verify that the summary hides nothing
inconvenient.

\paragraph{The pre-registered analysis.} Table~\ref{tab:app-prereg} is the confirmatory
result, reported twice: on the original unfiltered pools, which is the pre-registered
primary analysis, and again after the post-hoc degeneracy audit. The two differ
materially only where degeneracy is present. Unfiltered, replacing validation with the
proxy is Holm-significant on exactly one condition, Amylase, at $20/20$ replications,
and Appendix~\ref{app:degen} shows that the gain is lower error from near-constant
predictions rather than useful variant ranking. On the unfiltered pools, augmenting
validation is significant in both directions, improving three conditions and harming
Hydrophobic Core. After the audit, only the improvement on Lipophilicity mismatch
($p_{\mathrm{Holm}}=0.038$) and the harm on Hydrophobic Core ($0.002$) remain, while
Caco2 scaffold and Amylase move to $0.094$.

\paragraph{Ties and uncertainty.} Table~\ref{tab:app-prereg} reports, for each contrast,
the median paired difference, a 95\% percentile bootstrap interval for it (4000
resamples of replications), and wins, losses and ties. Rank-sum selectors often deploy
the same model as validation, so many differences are exactly zero; the median can then
be zero while the signed-rank test, which discards ties, is significant, as for
Val+Proxy on Caco2 scaffold with 7 wins, 3 losses and 10 ties. A change in the adjusted
$p$-value need not reflect a change in the estimated effect. The audit removes no Caco2
scaffold candidates, so that condition's paired differences, ties and interval are
identical in both analyses; only its Holm adjustment moves, because the other
conditions' $p$-values change.

\paragraph{Per-condition detail.} The tables that follow give, for every selector,
the median and interquartile range over replications, the paired median difference
against validation, win/loss counts, and raw and Holm-adjusted $p$-values within each
condition's family. These tables use the audited pools; the unfiltered confirmatory
contrasts are in Table~\ref{tab:app-prereg}. Penultimate-layer proxy results appear here rather than in the main
text; they track the layer-averaged variant closely ($\rho=+0.95$ between the two
signals), which is why only one is reported in the body.

\paragraph{Metric robustness.} Table~\ref{tab:app-metric} repeats the comparison on the
audited pools under MSE, MAE and Spearman. This is where the fragility of the one
shared-pool geometric win can be checked, and where the robustness of the \#params
baseline on Hydrophobic Core, which survives all three metrics, is visible. NDCG,
reported in Table~\ref{tab:all_results}, is not repeated there.

\paragraph{Metric definitions.} Spearman and NDCG compare the deployed model's test
predictions with the measured targets. NDCG uses the whole deployment set with no
cutoff, linear gains equal to each variant's fitness minus the lowest fitness in the
deployment set, and a $\log_2$ position discount, with the predictions as ranking scores
(scikit-learn \texttt{ndcg\_score}). Without a cutoff NDCG has a high floor: uniform
selection already scores $0.86$ on Amylase and $0.91$ on Hydrophobic Core, so
differences between selectors are small in absolute terms. Spearman is undefined when
the prediction vector is exactly constant; such pairs are dropped from the paired tests. No undefined values occur in the audited pools, so all replications
contribute to Table~\ref{tab:all_results}. On the unfiltered Amylase pools, the deployed
model's Spearman is undefined in 14 of 20 replications for Proxy and 16 of 20 for
$\lambda_{\max}$ (Table~\ref{tab:app-collapse}).

\begin{figure}[htbp]
\centering
\includegraphics[width=\textwidth]{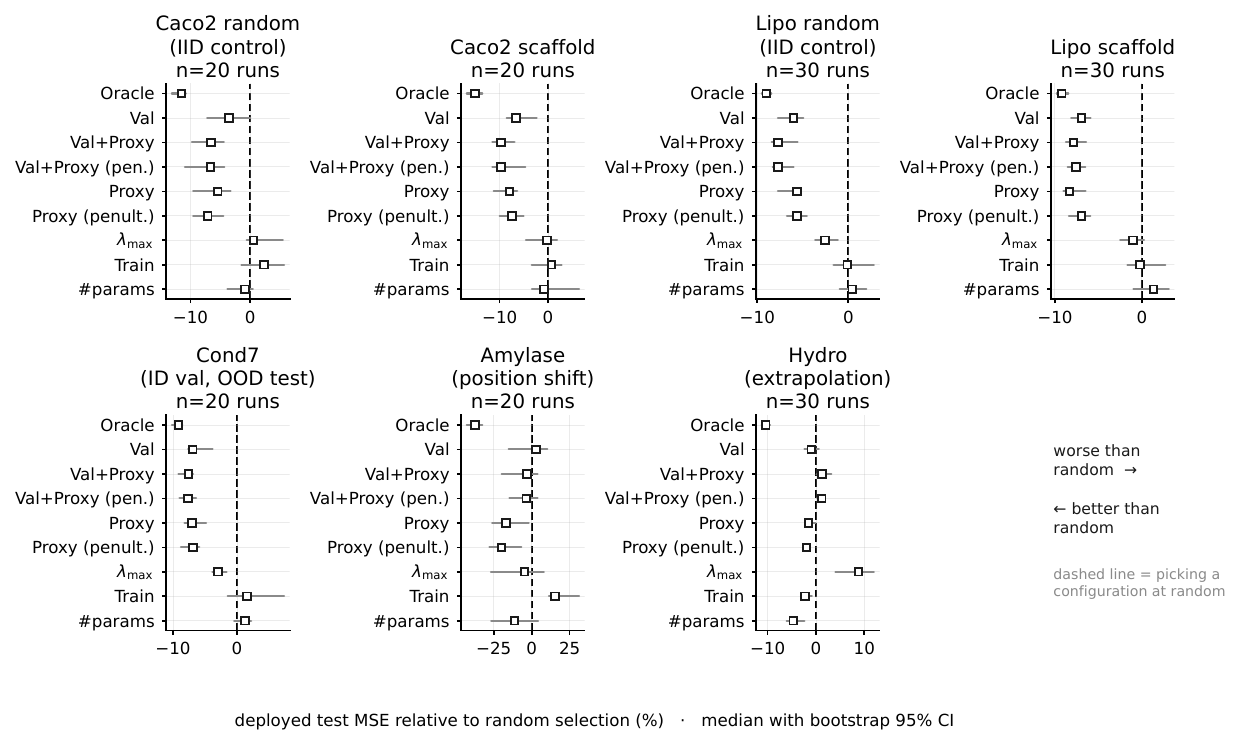}
\caption{Every selector against blind selection, per condition. Bars below the line
indicate a selector that deploys a better model than picking uniformly at random from
the same pool. A selector that cannot clear this line carries no usable
model-selection information, whatever its correlation with the generalisation gap.
Audited pools; Cond7 is Lipophilicity (mismatch) and Hydro is Hydrophobic Core.}
\label{fig:app-vsrandom}
\end{figure}

\begin{table}[htbp]\centering\scriptsize\setlength{\tabcolsep}{3pt}
\caption{Shared pool, Caco2 (random), audited pools (20 outer runs, median 48/48 candidates kept).}
\label{tab:app-pool-caco2-random}
\begin{tabular}{lrrrrr}\toprule
selector & median [IQR] & $\Delta$ vs Val & w/l & $p$ & $p_{\text{Holm}}$ \\\midrule
Oracle & 0.3354 [0.2945, 0.3642] & -0.0292 & 20/0 & $<$0.001 & $<$0.001 \\
Val & 0.3575 [0.3283, 0.3936] & -- & -- & -- & -- \\
Val+Proxy & 0.3446 [0.3144, 0.3749] & -0.0085 & 14/2 & 0.013 & 0.105 \\
Val+Proxy (pen.) & 0.3446 [0.3144, 0.3767] & -0.0105 & 13/2 & 0.009 & 0.081 \\
Val+$\lambda_{\max}$ & 0.3673 [0.3203, 0.3882] & -0.0113 & 12/5 & 0.177 & 0.355 \\
Proxy & 0.3490 [0.3178, 0.3723] & -0.0103 & 15/5 & 0.033 & 0.164 \\
Proxy (pen.) & 0.3449 [0.3158, 0.3723] & -0.0098 & 14/5 & 0.016 & 0.110 \\
$\lambda_{\max}$ & 0.3694 [0.3351, 0.4156] & +0.0114 & 7/13 & 0.076 & 0.233 \\
Train & 0.3851 [0.3536, 0.4047] & +0.0291 & 5/15 & 0.017 & 0.110 \\
\#params & 0.3597 [0.3363, 0.3991] & +0.0110 & 8/12 & 0.189 & 0.355 \\
Random & 0.3849 [0.3339, 0.4129] & +0.0128 & 6/14 & 0.058 & 0.233 \\
\bottomrule\end{tabular}\end{table}

\begin{table}[htbp]\centering\scriptsize\setlength{\tabcolsep}{3pt}
\caption{Shared pool, Caco2 (scaffold), audited pools (20 outer runs, median 48/48 candidates kept).}
\label{tab:app-pool-caco2-scaffold}
\begin{tabular}{lrrrrr}\toprule
selector & median [IQR] & $\Delta$ vs Val & w/l & $p$ & $p_{\text{Holm}}$ \\\midrule
Oracle & 0.4448 [0.4168, 0.4744] & -0.0376 & 18/0 & $<$0.001 & 0.002 \\
Val & 0.4750 [0.4462, 0.5269] & -- & -- & -- & -- \\
Val+Proxy & 0.4766 [0.4347, 0.5090] & +0.0000 & 7/3 & 0.047 & 0.375 \\
Val+Proxy (pen.) & 0.4766 [0.4347, 0.5179] & +0.0000 & 7/5 & 0.117 & 0.632 \\
Val+$\lambda_{\max}$ & 0.4782 [0.4549, 0.5277] & +0.0000 & 7/8 & 0.496 & 0.709 \\
Proxy & 0.4774 [0.4484, 0.5161] & -0.0189 & 11/7 & 0.157 & 0.632 \\
Proxy (pen.) & 0.4774 [0.4484, 0.5226] & +0.0026 & 9/10 & 0.355 & 0.709 \\
$\lambda_{\max}$ & 0.5017 [0.4627, 0.5310] & +0.0281 & 6/13 & 0.126 & 0.632 \\
Train & 0.5302 [0.4653, 0.5775] & +0.0323 & 6/14 & 0.105 & 0.632 \\
\#params & 0.5221 [0.4869, 0.5702] & +0.0283 & 4/16 & 0.076 & 0.531 \\
Random & 0.5188 [0.4938, 0.5691] & +0.0319 & 5/15 & 0.024 & 0.216 \\
\bottomrule\end{tabular}\end{table}

\begin{table}[htbp]\centering\scriptsize\setlength{\tabcolsep}{3pt}
\caption{Shared pool, Lipo (random), audited pools (30 outer runs, median 47/48 candidates kept).}
\label{tab:app-pool-lipo-random}
\begin{tabular}{lrrrrr}\toprule
selector & median [IQR] & $\Delta$ vs Val & w/l & $p$ & $p_{\text{Holm}}$ \\\midrule
Oracle & 0.3964 [0.3847, 0.4142] & -0.0110 & 23/0 & $<$0.001 & $<$0.001 \\
Val & 0.4089 [0.3963, 0.4259] & -- & -- & -- & -- \\
Val+Proxy & 0.4029 [0.3867, 0.4239] & +0.0000 & 11/2 & 0.019 & 0.074 \\
Val+Proxy (pen.) & 0.4044 [0.3867, 0.4241] & +0.0000 & 11/3 & 0.019 & 0.074 \\
Val+$\lambda_{\max}$ & 0.4188 [0.4009, 0.4381] & +0.0084 & 6/20 & 0.001 & 0.005 \\
Proxy & 0.4113 [0.3927, 0.4313] & -0.0004 & 15/9 & 0.775 & 0.775 \\
Proxy (pen.) & 0.4143 [0.3985, 0.4341] & +0.0001 & 12/15 & 0.230 & 0.459 \\
$\lambda_{\max}$ & 0.4245 [0.4063, 0.4471] & +0.0164 & 4/26 & $<$0.001 & $<$0.001 \\
Train & 0.4417 [0.4270, 0.4598] & +0.0268 & 3/27 & $<$0.001 & $<$0.001 \\
\#params & 0.4405 [0.4247, 0.4634] & +0.0311 & 2/28 & $<$0.001 & $<$0.001 \\
Random & 0.4345 [0.4201, 0.4621] & +0.0271 & 1/29 & $<$0.001 & $<$0.001 \\
\bottomrule\end{tabular}\end{table}

\begin{table}[htbp]\centering\scriptsize\setlength{\tabcolsep}{3pt}
\caption{Shared pool, Lipo (scaffold), audited pools (30 outer runs, median 47/48 candidates kept).}
\label{tab:app-pool-lipo-scaffold}
\begin{tabular}{lrrrrr}\toprule
selector & median [IQR] & $\Delta$ vs Val & w/l & $p$ & $p_{\text{Holm}}$ \\\midrule
Oracle & 0.4942 [0.4573, 0.5307] & -0.0114 & 20/0 & $<$0.001 & 0.001 \\
Val & 0.5129 [0.4716, 0.5505] & -- & -- & -- & -- \\
Val+Proxy & 0.5001 [0.4743, 0.5491] & +0.0000 & 12/5 & 0.093 & 0.371 \\
Val+Proxy (pen.) & 0.5001 [0.4759, 0.5505] & +0.0000 & 12/6 & 0.102 & 0.371 \\
Val+$\lambda_{\max}$ & 0.5255 [0.4825, 0.5535] & +0.0123 & 5/20 & 0.003 & 0.013 \\
Proxy & 0.5113 [0.4654, 0.5373] & +0.0000 & 14/10 & 0.331 & 0.663 \\
Proxy (pen.) & 0.5109 [0.4629, 0.5560] & +0.0005 & 12/15 & 0.564 & 0.663 \\
$\lambda_{\max}$ & 0.5405 [0.5026, 0.5832] & +0.0357 & 2/28 & $<$0.001 & $<$0.001 \\
Train & 0.5462 [0.5165, 0.6021] & +0.0479 & 1/29 & $<$0.001 & $<$0.001 \\
\#params & 0.5464 [0.5123, 0.5907] & +0.0440 & 3/26 & $<$0.001 & $<$0.001 \\
Random & 0.5460 [0.5130, 0.5866] & +0.0387 & 1/29 & $<$0.001 & $<$0.001 \\
\bottomrule\end{tabular}\end{table}

\begin{table}[htbp]\centering\scriptsize\setlength{\tabcolsep}{3pt}
\caption{Shared pool, Lipo (mismatch), audited pools (20 outer runs, median 47/48 candidates kept).}
\label{tab:app-pool-cond7}
\begin{tabular}{lrrrrr}\toprule
selector & median [IQR] & $\Delta$ vs Val & w/l & $p$ & $p_{\text{Holm}}$ \\\midrule
Oracle & 0.4768 [0.4621, 0.5161] & -0.0089 & 19/0 & $<$0.001 & 0.001 \\
Val & 0.4947 [0.4770, 0.5319] & -- & -- & -- & -- \\
Val+Proxy & 0.4775 [0.4721, 0.5304] & -0.0004 & 10/1 & 0.013 & 0.043 \\
Val+Proxy (pen.) & 0.4786 [0.4721, 0.5304] & -0.0014 & 11/2 & 0.011 & 0.043 \\
Val+$\lambda_{\max}$ & 0.5159 [0.4853, 0.5509] & +0.0169 & 4/15 & 0.003 & 0.015 \\
Proxy & 0.5058 [0.4733, 0.5314] & -0.0016 & 13/4 & 0.309 & 0.618 \\
Proxy (pen.) & 0.5004 [0.4759, 0.5341] & -0.0021 & 13/7 & 0.409 & 0.618 \\
$\lambda_{\max}$ & 0.5211 [0.4927, 0.5536] & +0.0189 & 3/17 & 0.001 & 0.005 \\
Train & 0.5450 [0.5274, 0.6028] & +0.0463 & 1/19 & $<$0.001 & $<$0.001 \\
\#params & 0.5388 [0.5058, 0.5827] & +0.0409 & 0/20 & $<$0.001 & $<$0.001 \\
Random & 0.5347 [0.5110, 0.5744] & +0.0380 & 0/20 & $<$0.001 & $<$0.001 \\
\bottomrule\end{tabular}\end{table}

\begin{table}[htbp]\centering\scriptsize\setlength{\tabcolsep}{3pt}
\caption{Shared pool, Amylase, audited pools (20 outer runs, median 11/48 candidates kept).}
\label{tab:app-pool-amylase}
\begin{tabular}{lrrrrr}\toprule
selector & median [IQR] & $\Delta$ vs Val & w/l & $p$ & $p_{\text{Holm}}$ \\\midrule
Oracle & 1.4681 [1.3505, 1.6811] & -0.8348 & 19/0 & $<$0.001 & 0.001 \\
Val & 2.5773 [2.1102, 2.6980] & -- & -- & -- & -- \\
Val+Proxy & 2.3381 [1.8289, 2.5956] & +0.0000 & 8/2 & 0.093 & 0.556 \\
Val+Proxy (pen.) & 2.3149 [2.0077, 2.5956] & +0.0000 & 8/3 & 0.131 & 0.653 \\
Val+$\lambda_{\max}$ & 2.6598 [2.5062, 2.8849] & +0.0000 & 3/6 & 0.214 & 0.737 \\
Proxy & 2.1202 [1.7053, 2.4845] & -0.3628 & 14/5 & 0.027 & 0.188 \\
Proxy (pen.) & 1.9605 [1.6724, 2.2805] & -0.4187 & 14/5 & 0.020 & 0.157 \\
$\lambda_{\max}$ & 2.3807 [1.6241, 2.7906] & +0.0000 & 9/9 & 0.420 & 0.841 \\
Train & 2.8910 [2.7190, 3.2768] & +0.3046 & 1/14 & 0.001 & 0.007 \\
\#params & 2.2475 [1.6394, 2.5168] & -0.1447 & 11/8 & 0.184 & 0.737 \\
Random & 2.5144 [2.3494, 2.5485] & -0.0735 & 12/8 & 0.784 & 0.841 \\
\bottomrule\end{tabular}\end{table}

\begin{table}[htbp]\centering\scriptsize\setlength{\tabcolsep}{3pt}
\caption{Shared pool, Hydrophobic Core, audited pools (30 outer runs, median 34/48 candidates kept).}
\label{tab:app-pool-hydro}
\begin{tabular}{lrrrrr}\toprule
selector & median [IQR] & $\Delta$ vs Val & w/l & $p$ & $p_{\text{Holm}}$ \\\midrule
Oracle & 20.5821 [20.2789, 20.9410] & -2.2398 & 30/0 & $<$0.001 & $<$0.001 \\
Val & 22.9161 [22.3655, 23.3435] & -- & -- & -- & -- \\
Val+Proxy & 23.4878 [23.0254, 23.8202] & +0.5885 & 5/20 & $<$0.001 & 0.004 \\
Val+Proxy (pen.) & 23.2913 [23.0405, 23.7761] & +0.6543 & 6/20 & 0.003 & 0.020 \\
Val+$\lambda_{\max}$ & 23.4338 [22.4991, 24.2371] & +0.2414 & 11/19 & 0.114 & 0.343 \\
Proxy & 22.8071 [22.4965, 23.0753] & -0.0748 & 16/14 & 0.792 & 0.792 \\
Proxy (pen.) & 22.6362 [22.4050, 22.8820] & -0.1671 & 21/8 & 0.071 & 0.284 \\
$\lambda_{\max}$ & 25.3299 [23.5878, 26.1371] & +2.4095 & 3/27 & $<$0.001 & $<$0.001 \\
Train & 22.4987 [22.3354, 22.9085] & -0.3221 & 20/10 & 0.047 & 0.236 \\
\#params & 22.0816 [21.6544, 22.5313] & -0.2283 & 18/2 & 0.001 & 0.008 \\
Random & 23.0458 [22.9525, 23.1819] & +0.2272 & 11/19 & 0.152 & 0.343 \\
\bottomrule\end{tabular}\end{table}

\begin{table}[htbp]\centering\scriptsize\setlength{\tabcolsep}{3pt}
\caption{Shared pool, GDSC2, audited pools (20 outer runs, median 48/48 candidates kept).}
\label{tab:app-pool-gdsc-drug}
\begin{tabular}{lrrrrr}\toprule
selector & median [IQR] & $\Delta$ vs Val & w/l & $p$ & $p_{\text{Holm}}$ \\\midrule
Oracle & 0.6666 [0.4961, 0.7516] & -0.0529 & 19/0 & $<$0.001 & 0.001 \\
Val & 0.7086 [0.5330, 0.8320] & -- & -- & -- & -- \\
Val+Proxy & 0.7116 [0.5428, 0.8201] & -0.0018 & 10/8 & 0.983 & 1.000 \\
Val+Proxy (pen.) & 0.7181 [0.5428, 0.8201] & +0.0000 & 9/9 & 0.647 & 1.000 \\
Val+$\lambda_{\max}$ & 0.6925 [0.5297, 0.8018] & +0.0000 & 9/7 & 0.255 & 1.000 \\
Proxy & 0.7256 [0.5899, 0.8203] & +0.0286 & 8/12 & 0.177 & 1.000 \\
Proxy (pen.) & 0.7347 [0.5507, 0.8448] & +0.0242 & 7/13 & 0.090 & 0.807 \\
$\lambda_{\max}$ & 0.7047 [0.5481, 0.8092] & +0.0003 & 10/10 & 0.622 & 1.000 \\
Train & 0.7335 [0.5683, 0.8009] & +0.0131 & 7/12 & 0.334 & 1.000 \\
\#params & 0.6868 [0.5423, 0.8294] & +0.0134 & 9/11 & 0.388 & 1.000 \\
Random & 0.7295 [0.5631, 0.8023] & +0.0059 & 9/11 & 0.452 & 1.000 \\
\bottomrule\end{tabular}\end{table}

\begin{table}[htbp]\centering\scriptsize\setlength{\tabcolsep}{3pt}
\caption{The pre-registered confirmatory analysis, deployment MSE, Holm-corrected across the four-condition family within each selector. Reported on the original unfiltered pools (the pre-registered primary analysis) and again on the audited pools (post-hoc). $\Delta$ is the paired median difference against Val, negative favouring the selector; the interval is a 95\% percentile bootstrap over replications; w/l/t counts wins, losses and ties. The signed-rank test discards ties (Appendix~\ref{app:protocol}).}
\label{tab:app-prereg}
\begin{tabular}{lrrrrrrrr}\toprule
& \multicolumn{4}{c}{unfiltered (primary)} & \multicolumn{4}{c}{audited} \\
\cmidrule(lr){2-5}\cmidrule(lr){6-9}
condition & $\Delta$ & 95\% CI & w/l/t & $p_{\text{Holm}}$ & $\Delta$ & 95\% CI & w/l/t & $p_{\text{Holm}}$ \\\midrule
\multicolumn{9}{l}{\emph{Proxy against Val}} \\
Lipo (mismatch) & -0.0009 & [-0.0068, 0.0000] & 12/5/3 & 0.407 & -0.0016 & [-0.0073, 0.0000] & 13/4/3 & 0.618 \\
Caco2 (scaffold) & -0.0189 & [-0.0301, +0.0093] & 11/7/2 & 0.328 & -0.0189 & [-0.0301, +0.0093] & 11/7/2 & 0.471 \\
Amylase & -1.3934 & [-1.5327, -1.1955] & 20/0/0 & \textbf{$<$0.001} & -0.3628 & [-0.6133, -0.0268] & 14/5/1 & 0.108 \\
Hydrophobic Core & +0.1634 & [-0.1005, +0.5858] & 12/18/0 & 0.328 & -0.0748 & [-0.4902, +0.3885] & 16/14/0 & 0.792 \\
\midrule
\multicolumn{9}{l}{\emph{Val+Proxy against Val}} \\
Lipo (mismatch) & -0.0005 & [-0.0070, 0.0000] & 11/2/7 & \textbf{0.046} & -0.0004 & [-0.0044, 0.0000] & 10/1/9 & \textbf{0.038} \\
Caco2 (scaffold) & +0.0000 & [-0.0218, 0.0000] & 7/3/10 & \textbf{0.047} & +0.0000 & [-0.0218, 0.0000] & 7/3/10 & 0.094 \\
Amylase & -0.3863 & [-0.5346, 0.0000] & 13/2/5 & \textbf{0.027} & +0.0000 & [-0.1342, 0.0000] & 8/2/10 & 0.094 \\
Hydrophobic Core & +0.7251 & [+0.3022, +1.1516] & 5/22/3 & \textbf{$<$0.001} & +0.5885 & [+0.0532, +0.8273] & 5/20/5 & \textbf{0.002} \\
\bottomrule\end{tabular}\end{table}

\begin{table}[htbp]\centering\scriptsize\setlength{\tabcolsep}{3pt}
\caption{Metric robustness on the audited pools: selectors that beat Val at Holm-adjusted $\alpha=0.05$ under each metric, with the selection rule unchanged. MSE is train-$\sigma$ standardised; MAE is what the TDC leaderboards report; $\rho$ is Spearman, what FLIP2 reports.}
\label{tab:app-metric}
\begin{tabular}{llll}\toprule
condition & under MSE & under MAE & under $\rho$ \\\midrule
Caco2 (random) & -- & -- & -- \\
Caco2 (scaffold) & -- & -- & -- \\
Lipo (random) & -- & -- & -- \\
Lipo (scaffold) & -- & -- & -- \\
Lipo (mismatch) & Val+Proxy, Val+Proxy (pen.) & -- & -- \\
Amylase & -- & -- & -- \\
Hydrophobic Core & \#params & Train, \#params & \#params \\
GDSC2 & -- & -- & -- \\
\bottomrule\end{tabular}\end{table}

\section{Sequential hyperparameter optimisation}
\label{app:seq}

\paragraph{Arms.} Nine per condition: random, TPE and HEBO on validation alone, and
matched multi-objective TPE and HEBO on each of the three training-side signals. The
TPE arms use Optuna's MOTPE with non-dominated sorting and a hypervolume tie-break;
the HEBO arms use its general optimiser configured for two objectives. Neither
scalarises, so no relative weight between validation loss and the signal had to be
chosen, which matters because choosing one would itself have required validation.
The penultimate-layer arms, and all conditions other than Lipophilicity mismatch and
Hydrophobic Core, were added after unblinding (Appendix~\ref{app:protocol}).

\paragraph{Deploy rule.} A multi-objective optimiser returns a Pareto front rather than
a single recommendation, so a rule to collapse it is unavoidable. Single-objective arms
deploy the trial minimising validation MSE, which is the optimiser's own answer.
Multi-objective arms deploy the trial minimising the rank sum of validation MSE and the
signal over all observed trials - the same rule used in the shared pool, which is what
makes the two phases comparable. The rank sum runs over every observed trial, not only
the Pareto-optimal subset, so these arms measure multi-objective search combined with
our deploy rule rather than what the optimiser would itself recommend.

\paragraph{Results.} Table~\ref{tab:app-seq} gives all nine arms on all eight
conditions; it and Figure~\ref{fig:app-budget} use unfiltered search trajectories.
Figure~\ref{fig:app-budget} shows the budget dependence: expanding the
trajectory from 8 to 48 trials improves deployment where validation rankings already
transfer, and does not on Hydrophobic Core. Optimising a misaligned objective more
thoroughly does not reveal the extrapolative optimum. These conclusions concern
multi-objective search combined with our deploy rule; other ways of using geometry
inside a search were not tested.

\paragraph{The positive case.} The Lipophilicity-scaffold improvement is
documented here in full, together with the two conditions that fail to reproduce it -
the matched random split and Caco2 scaffold - and the held-out-compound condition where
the same signal family is harmful. Lipophilicity scaffold was pre-registered as
descriptive rather than confirmatory (Appendix~\ref{app:protocol}), so this is a
boundary case and not evidence of a general advantage.

\paragraph{A caveat on pooling.} Statements pooled across arms are descriptive only.
The underlying observations are method~$\times$~replication pairs and are not
independent: the nine arms within one replication share a split, an initialisation seed
and a data ordering. All inference is therefore per condition, paired within
replication, with Holm correction inside that condition.

\begin{figure}[htbp]
\centering
\includegraphics[width=\textwidth]{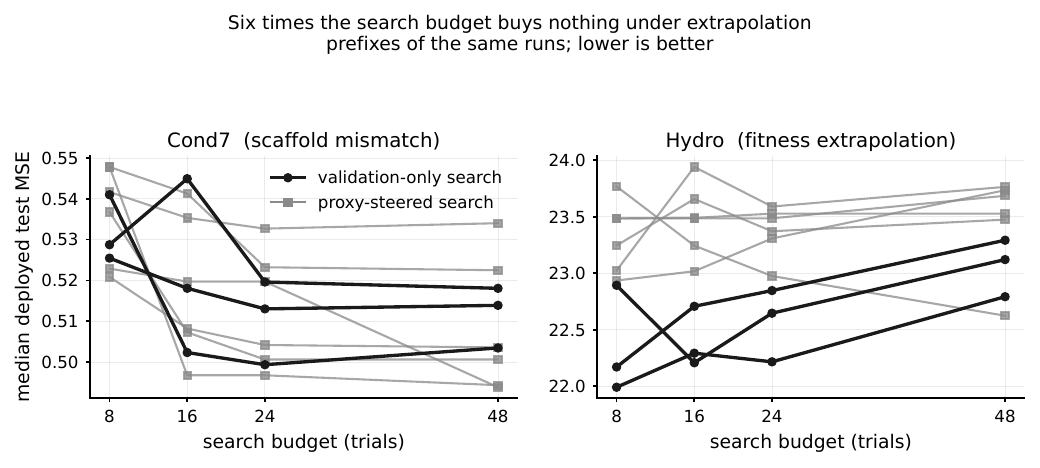}
\caption{Deployment loss against search budget. More search helps where validation
ranking transfers and does not where it fails. Unfiltered search trajectories; Cond7 is
Lipophilicity (mismatch) and Hydro is Hydrophobic Core.}
\label{fig:app-budget}
\end{figure}

\begin{table}[htbp]\centering\scriptsize\setlength{\tabcolsep}{3pt}
\caption{Sequential search, all nine arms, deployment MSE (median over replications, unfiltered trajectories). The reference is validation-only TPE; \textbf{bold} marks arms that Holm-beat it within that condition, \underline{underline} marks arms that are Holm-worse. Holm is applied over the eight contrasts within each condition. Multi-objective arms deploy by the rank-sum rule of Appendix~\ref{app:protocol}.}
\label{tab:app-seq}
\begin{tabular}{lrrrrrrrrr}\toprule
& \multicolumn{3}{c}{Val} & \multicolumn{2}{c}{Val+Proxy} & \multicolumn{2}{c}{Val+Proxy (pen.)} & \multicolumn{2}{c}{Val+$\lambda_{\max}$} \\
\cmidrule(lr){2-4}\cmidrule(lr){5-6}\cmidrule(lr){7-8}\cmidrule(lr){9-10}
condition & Random & TPE & HEBO & TPE & HEBO & TPE & HEBO & TPE & HEBO \\\midrule
Lipo (mismatch) & 0.518 & 0.503 & 0.514 & 0.501 & 0.494 & 0.494 & 0.504 & 0.522 & \underline{0.534} \\
Hydrophobic Core & 22.79 & 23.29 & 23.12 & 23.76 & 23.69 & 23.47 & 23.53 & 23.74 & 22.62 \\
Caco2 (random) & 0.351 & 0.355 & 0.351 & 0.340 & 0.341 & 0.332 & 0.335 & 0.344 & 0.342 \\
Caco2 (scaffold) & 0.464 & 0.481 & 0.460 & 0.443 & 0.457 & 0.424 & 0.445 & 0.464 & 0.442 \\
Lipo (random) & 0.412 & 0.409 & 0.424 & 0.411 & 0.411 & 0.415 & 0.408 & 0.407 & 0.416 \\
Lipo (scaffold) & 0.522 & 0.521 & 0.536 & \textbf{0.501} & \textbf{0.514} & \textbf{0.501} & \textbf{0.516} & 0.542 & 0.525 \\
Amylase & 2.374 & 2.465 & 2.674 & \textbf{1.845} & 2.065 & \textbf{1.757} & \textbf{1.954} & 2.135 & 2.025 \\
GDSC2 & 0.739 & 0.727 & 0.722 & \underline{0.767} & 0.756 & 0.765 & 0.750 & 0.724 & 0.745 \\
\bottomrule\end{tabular}\end{table}

\section{Degeneracy audit}
\label{app:degen}

Low activity drives every tracked signal towards its minimum. A collapsed network has
a near-zero proxy, and its $\lambda_{\max}$ sits at the floor of 2 set by the output
bias (Appendix~\ref{app:protocol}). Across all conditions, constant predictors have a
median $\lambda_{\max}$ of $2.14$ against $16.3$ for other networks, and a median proxy
of $1.9\times10^{-5}$ against $0.015$. Whether this matters is a property of the
condition, not of the signal, and must be checked per condition rather than assumed.

On Amylase it matters completely. Before filtering, the proxy improves deployment MSE
by about $1.39$ and wins all 20 replications. The lower error is real, but it is not
useful variant ranking. The model the proxy selects has a median dead-unit fraction of
$0.998$ and effectively zero prediction variance. Its median deployment MSE, $1.145$,
equals that of a predictor that outputs the training mean, and its Spearman correlation
is undefined in 14 of 20 replications because its predictions are exactly constant.
Under this position shift, predicting close to the training mean is nearly optimal in
MSE: the within-pool oracle reaches $1.13$, while the validation-selected model reaches
$2.54$. Collapse therefore presents as successful out-of-distribution selection. FLIP2
independently reports poor supervised performance on this split while zero-shot
protein-language-model likelihoods do substantially better \citep{didi2026flip2},
consistent with the split rewarding a degenerate solution rather than with our optimiser
failing to train. Table~\ref{tab:app-collapse} shows the same behaviour for every
criterion that deploys the lowest geometric score, and for none that combines a
geometric score with validation.

\paragraph{What the audit removes.} The audit's effect on Amylase comes from its
dead-unit criterion, not from removing constant predictors
(Table~\ref{tab:app-collapse-filters}). Removing only the candidates flagged as constant
leaves the proxy's advantage intact: the proxy then deploys a nearly inactive
network, with a median of 96\% dead units and a test-prediction spread near $10^{-5}$,
which is constant in all but name. Excluding networks with more than half their units
dead, with or without the constant criterion, shrinks the difference to a
non-significant $-0.36$ ($p_{\mathrm{Holm}}=0.108$).

Hydrophobic Core is the instructive intermediate case: many partially dead networks but
no constant predictors, which is why it is reported rather than excluded. No selector
deploys a constant predictor there, and a training-mean predictor reaches an MSE of
$29.8$, worse than every selector in Table~\ref{tab:app-pool-hydro}, so collapse is not
rewarded. Table~\ref{tab:app-degen} gives the unfiltered quantities per condition, and
Figure~\ref{fig:app-degen} shows the mechanism directly.

The general lesson is methodological, and it is the part of this study we would most
want another group to adopt: any selection criterion rewarding low activity, low
complexity or low curvature needs an explicit collapse audit before its apparent gains
are interpreted. Without one, ``my signal works'' and ``my signal found a dead network''
are indistinguishable.

\begin{figure}[htbp]
\centering
\includegraphics[width=\textwidth]{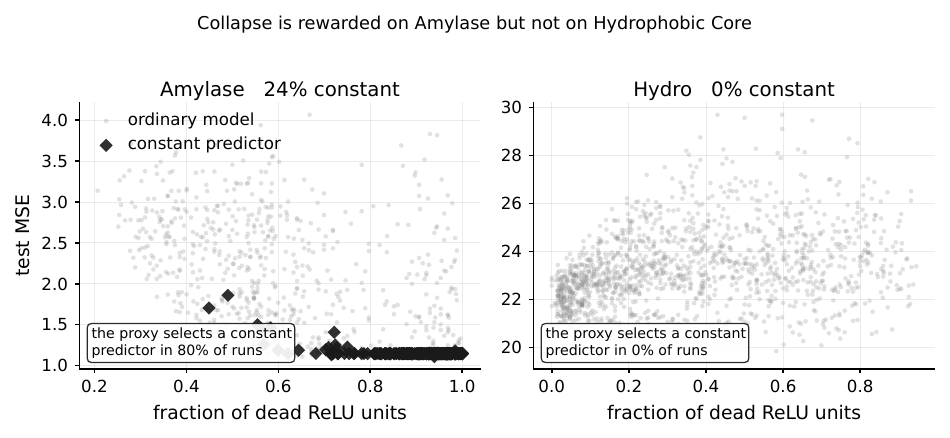}
\caption{Dead-unit fraction against deployment loss on the unfiltered pools, with
constant predictors marked. The two panels are a contrast, not a pattern. On Amylase
(left) $24\%$ of candidates collapse to a constant predictor, and because predicting
near the training mean is competitive in MSE on that split, they sit among the
lowest-loss models, so the proxy selects one in $80\%$ of replications. On Hydrophobic
Core (right) many networks are partially dead but none is constant, and the proxy never
selects one. Rates for every selector are in Table~\ref{tab:app-collapse}. Whether
minimising a geometric signal rewards collapse is therefore a property of the
condition, not of the signal, which is why it has to be audited per condition rather
than assumed.}
\label{fig:app-degen}
\end{figure}

\begin{table}[htbp]\centering\scriptsize\setlength{\tabcolsep}{3pt}
\caption{Degeneracy by condition, computed \emph{before} filtering. ``const.'' is the fraction of candidates whose test-set predictions have $\mathrm{std}(\hat y)<10^{-6}$; ``dead$>$0.5'' the fraction with more than half their ReLU units inactive on every example of the training proxy subset; ``dead@Proxy'' the median dead-unit fraction of the model Proxy selects.}
\label{tab:app-degen}
\begin{tabular}{lrrrr}\toprule
condition & const. & dead$>$0.5 & median dead & dead@Proxy \\\midrule
Caco2 (random) & 0.0\% & 0.0\% & 0.003 & \textbf{0.026} \\
Caco2 (scaffold) & 0.0\% & 0.0\% & 0.002 & \textbf{0.028} \\
Lipo (random) & 0.0\% & 2.8\% & 0.008 & \textbf{0.098} \\
Lipo (scaffold) & 0.0\% & 2.5\% & 0.008 & \textbf{0.095} \\
Lipo (mismatch) & 0.0\% & 2.1\% & 0.008 & \textbf{0.069} \\
Amylase & 24.3\% & 76.5\% & 0.675 & \textbf{0.998} \\
Hydrophobic Core & 0.0\% & 30.1\% & 0.298 & \textbf{0.525} \\
GDSC2 & 0.0\% & 0.4\% & 0.004 & \textbf{0.051} \\
\bottomrule\end{tabular}\end{table}

\begin{table}[htbp]\centering\scriptsize\setlength{\tabcolsep}{4pt}
\caption{How often each selector deploys a constant predictor on the unfiltered Amylase pools (20 replications). A model counts as constant when the standard deviation of its predictions is below $10^{-6}$, on the test set (as in the audit) or on the training set. Spearman is undefined when the test predictions are exactly constant. No selector deploys a constant predictor on Hydrophobic Core.}
\label{tab:app-collapse}
\begin{tabular}{lrrrr}\toprule
selector & constant (test) & constant (train) & undefined Spearman & median dead fraction \\\midrule
Val & 0\% & 0\% & 0/20 & 0.482 \\
Proxy & 80\% & 100\% & 14/20 & 0.998 \\
Proxy (pen.) & 85\% & 100\% & 14/20 & 0.973 \\
$\lambda_{\max}$ & 100\% & 100\% & 16/20 & 0.984 \\
Val+Proxy & 0\% & 0\% & 0/20 & 0.502 \\
Val+Proxy (pen.) & 0\% & 0\% & 0/20 & 0.532 \\
Val+$\lambda_{\max}$ & 0\% & 0\% & 0/20 & 0.931 \\
Train & 0\% & 0\% & 0/20 & 0.370 \\
\#params & 10\% & 10\% & 1/20 & 0.758 \\
\bottomrule\end{tabular}\end{table}

\begin{table}[htbp]\centering\scriptsize\setlength{\tabcolsep}{4pt}
\caption{Proxy against Val on Amylase under each candidate filter, deployment MSE, Holm-corrected across the four confirmatory conditions. Only the dead-unit criterion makes the difference non-significant.}
\label{tab:app-collapse-filters}
\begin{tabular}{lrrr}\toprule
filter & $\Delta$ & w/l & $p_{\text{Holm}}$ \\\midrule
none (primary analysis) & -1.3934 & 20/0 & $<$0.001 \\
constant predictors, test-set flag & -1.3938 & 20/0 & $<$0.001 \\
constant predictors, training-set flag & -0.8218 & 19/1 & $<$0.001 \\
more than half the units dead & -0.3628 & 14/5 & 0.108 \\
full audit (both criteria) & -0.3628 & 14/5 & 0.108 \\
\bottomrule\end{tabular}\end{table}

\section{Hyperparameter mechanism}
\label{app:hpmech}

Table~\ref{tab:app-hp} compares, within each identical audited pool, the
hyperparameters of the selected model against those of the deployment oracle. On Hydrophobic Core the oracle
prefers a learning rate $10^{0.312}\simeq2.05\times$ larger and a width $2^{2}=4\times$
larger, with less dropout and less weight decay; both differences are consistent across
replications. No comparable systematic mismatch appears on the molecular conditions.

The same table explains the \#params result, which is otherwise easy to
misread. Selecting the smallest model beats validation on Hydrophobic Core under all
three metrics, but not because small models extrapolate better: \#params and
validation choose the \emph{same} width, and both are wrong about it in the same
direction, since the oracle prefers wider models. The heuristic wins because the models
it lands on are closer to the oracle on learning rate, dropout and weight decay, and
those values are effectively random: \#params deploys the first-drawn of the one to
five smallest networks in the pool (Appendix~\ref{app:protocol}). The lesson is not ``prefer small models'' but that a heuristic containing no
geometry can outperform validation simply by being less misaligned with deployment on
the axes that matter.

\begin{table}[htbp]\centering\scriptsize\setlength{\tabcolsep}{3pt}
\caption{Median paired hyperparameter difference between the selected model and the deployment oracle in the same pool. Positive $\Delta\log_{10}\eta$ means the oracle prefers a larger learning rate. Audited pools.}
\label{tab:app-hp}
\begin{tabular}{llrrrr}\toprule
condition & selector & $\Delta\log_{10}\eta$ & $\Delta\log_2 w$ & $\Delta$dropout & $\Delta$wd \\\midrule
Caco2 (random) & Val & +0.110 & +0.00 & -0.029 & +5.8e-04 \\
Caco2 (random) & \#params & +0.170 & +1.50 & -0.056 & +7.6e-04 \\
Caco2 (scaffold) & Val & +0.222 & +0.50 & +0.014 & -1.3e-06 \\
Caco2 (scaffold) & \#params & +0.171 & +2.00 & +0.112 & +2.0e-03 \\
Lipo (random) & Val & +0.000 & +0.00 & +0.013 & +0.0e+00 \\
Lipo (random) & \#params & -0.227 & +3.00 & +0.044 & +9.7e-04 \\
Lipo (scaffold) & Val & +0.000 & +0.00 & +0.000 & +2.5e-04 \\
Lipo (scaffold) & \#params & -0.111 & +3.00 & +0.048 & +2.9e-03 \\
Lipo (mismatch) & Val & +0.119 & +0.00 & +0.013 & +3.1e-04 \\
Lipo (mismatch) & \#params & -0.051 & +2.50 & -0.026 & +1.1e-03 \\
Amylase & Val & -0.117 & -1.00 & +0.114 & -3.9e-06 \\
Amylase & \#params & -0.126 & +1.00 & +0.189 & +7.9e-07 \\
Hydrophobic Core & Val & +0.312 & +2.00 & -0.218 & -4.7e-05 \\
Hydrophobic Core & \#params & +0.187 & +2.00 & -0.158 & -1.5e-06 \\
GDSC2 & Val & +0.089 & -1.00 & -0.061 & +1.1e-05 \\
GDSC2 & \#params & +0.066 & +1.00 & -0.111 & -2.6e-05 \\
\bottomrule\end{tabular}\end{table}

\section{Geometry and curvature diagnostics}
\label{app:geom}

Table~\ref{tab:app-geom} gives the marginal and partial correlations behind
Section~\ref{sec:failure}. The proxy and $\lambda_{\max}$ appear strongly
anti-correlated marginally, but the association is largely induced by the learning
rate, which raises the proxy while lowering $\lambda_{\max}$; controlling for it
collapses the relationship. Correlations computed within narrow learning-rate bins agree
with the residualised estimates, so this is not an artifact of linear residualisation.
Throughout, $\lambda_{\max}$ and $\operatorname{tr}(H)$ are full-network quantities
(Appendix~\ref{app:protocol}), not the final-layer Hessian bounded in
Proposition~\ref{prop:bound}. The proxy's partial correlation with the gap is positive on
the molecular conditions and Amylase, negative on Hydrophobic Core and near zero on
GDSC2; where it is negative, a rule that deploys the lowest score is misdirected.

Table~\ref{tab:app-trace} addresses the natural rescue - that $\lambda_{\max}$ fails
because it summarises a single direction. The top eigenvalue is indeed a median
$0.12$--$0.14$ of the trace, so it does miss most of the curvature; measuring the rest
does not help. The Hutchinson estimator uses 30 Rademacher probes with a relative
standard error of $3.2$--$4.9\%$, small enough for the trace columns to carry the
comparison. The approximate relative-flatness quantity
$\lVert w\rVert^2\operatorname{tr}(H)$ is the one curvature-derived measure behaving as
flatness theory predicts on the molecular conditions, but decomposing it shows the
weight norm carries that behaviour and the trace contributes almost nothing once the
norm is controlled; on Hydrophobic Core the decomposition inverts. This is why we
describe the proxy as behaving like a capacity or norm quantity rather than a curvature
one. Our relative-flatness measure is a network-level surrogate, not the exact
layer-wise quadratic form of \citet{petzka2021relative}, and we do not claim otherwise.

\begin{figure}[htbp]
\centering
\includegraphics[width=\textwidth]{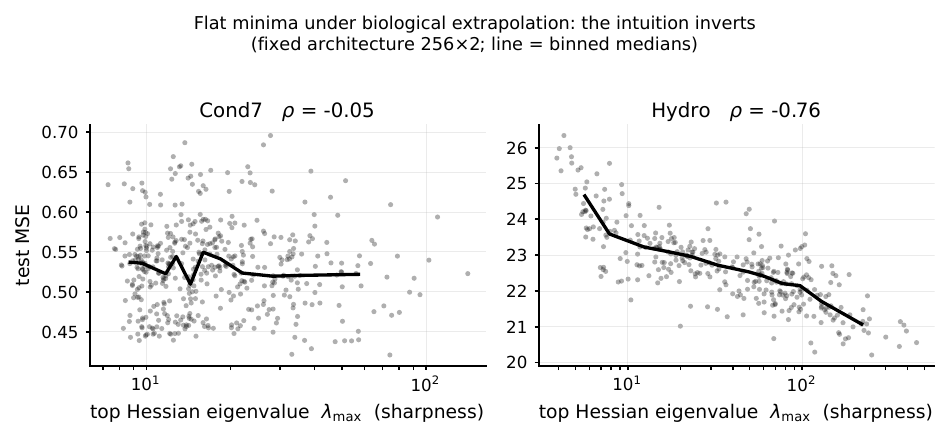}
\caption{Top Hessian eigenvalue against deployment MSE, on the fixed-architecture
subset (two hidden layers of width 256; line = binned medians). On the constructed
mismatch there is essentially no relationship ($\rho=-0.05$); under fitness
extrapolation the trend inverts strongly ($\rho=-0.76$), so sharper minima deploy
better. This is a different quantity from the trace--gap correlation reported in
\S\ref{sec:failure} ($\rho\approx-0.52$), which is measured on the full pool; the
corresponding $\lambda_{\max}$--gap correlation here is $-0.78$. Audited
fixed-architecture pools; Cond7 is Lipophilicity (mismatch) and Hydro is Hydrophobic Core.}
\label{fig:app-inv}
\end{figure}

\begin{table}[htbp]\centering\scriptsize\setlength{\tabcolsep}{3pt}
\caption{The proxy against exact curvature and against the generalisation gap. ``marg.'' is the marginal Spearman correlation across the sweep; ``partial'' additionally residualises on learning rate, width and depth. The learning rate is the confounder: it raises the proxy while lowering $\lambda_{\max}$, manufacturing the marginal association. Audited pools.}
\label{tab:app-geom}
\begin{tabular}{lrrrrrr}\toprule
 & \multicolumn{2}{c}{Proxy $\sim\lambda_{\max}$} & \multicolumn{2}{c}{Proxy $\sim$ gap} & \multicolumn{2}{c}{$\lambda_{\max}\sim$ gap} \\
\cmidrule(lr){2-3}\cmidrule(lr){4-5}\cmidrule(lr){6-7}
condition & marg. & partial & marg. & partial & marg. & partial \\\midrule
Caco2 (random) & -0.13 & +0.20 & +0.13 & +0.16 & +0.08 & +0.07 \\
Caco2 (scaffold) & -0.14 & +0.18 & +0.13 & +0.16 & +0.02 & -0.01 \\
Lipo (random) & -0.39 & -0.09 & +0.21 & +0.35 & +0.07 & -0.01 \\
Lipo (scaffold) & -0.39 & -0.09 & +0.15 & +0.23 & +0.04 & -0.01 \\
Lipo (mismatch) & -0.40 & -0.12 & +0.15 & +0.23 & +0.04 & -0.02 \\
Amylase & -0.58 & -0.73 & +0.27 & +0.56 & -0.31 & -0.36 \\
Hydrophobic Core & -0.55 & -0.14 & -0.23 & -0.30 & -0.19 & -0.48 \\
GDSC2 & +0.47 & +0.50 & -0.05 & +0.01 & -0.01 & -0.02 \\
\bottomrule\end{tabular}\end{table}

\begin{table}[htbp]\centering\scriptsize\setlength{\tabcolsep}{3pt}
\caption{Hessian trace and approximate relative flatness, $\lVert w\rVert^2\operatorname{tr}(H)$, on the three conditions where the trace was measured (720 networks trained, 642 after the audit; Hutchinson with 30 Rademacher probes). All entries are partial Spearman correlations with the generalisation gap given learning rate, width and depth. The final row shows that the trace adds almost nothing once the weight norm is controlled.}
\label{tab:app-trace}
\begin{tabular}{lrrr}\toprule
quantity & Caco2 (scaffold) & Lipo (mismatch) & Hydrophobic Core \\\midrule
$\lambda_{\max}$ & +0.01 & -0.05 & -0.48 \\
$\operatorname{tr}(H)$ & -0.02 & -0.17 & -0.52 \\
$\lVert w\rVert^2\operatorname{tr}(H)$ & +0.24 & +0.33 & -0.41 \\
$\lVert w\rVert^2$ alone & +0.18 & +0.31 & -0.21 \\
Proxy (ours) & +0.17 & +0.29 & -0.31 \\
$\operatorname{tr}(H)$ given $\lVert w\rVert^2$ & +0.08 & +0.07 & -0.49 \\
\midrule
median $\lambda_{\max}/\operatorname{tr}(H)$ & +0.139 & +0.117 & +0.136 \\
Hutchinson rel.\ SE & +0.044 & +0.032 & +0.049 \\
Proxy $\sim\operatorname{tr}(H)$ (partial) & -0.444 & -0.613 & -0.206 \\
Proxy $\sim\lVert w\rVert^2\operatorname{tr}(H)$ (partial) & +0.438 & +0.719 & +0.040 \\
\bottomrule\end{tabular}
\end{table}

\section{Ablations and additional experiments}
\label{app:abl}

Table~\ref{tab:app-abl} reports two ablations that rule out simpler explanations for
the failure.

\paragraph{Fixed architecture.} Pinning the network to two hidden layers of width 256
removes cross-architecture comparability as an explanation. It does not produce a
general rescue. On Hydrophobic Core under this restriction validation becomes
\emph{worse} than uniform random selection, in part because it selects models with
substantially more dead units than the oracle. The proximate mechanism therefore changes
with which degrees of freedom the search may vary, while the common failure does not:
the development regime rewards models that deployment does not.

\paragraph{Validation-free selection.} Folding the validation split back into training
removes validation from the problem entirely, isolating what the signal contributes on
its own rather than as a tie-break on top of a working validation set. In the shared
pool this costs nothing measurable on any non-degenerate condition. Descriptively,
adding the proxy to training-loss selection (Train+Proxy) gives a lower median
deployment MSE than training-loss selection alone in all four validation-free
conditions, and lower than random selection in three, all but Amylase
(Table~\ref{tab:app-abl}). We did not test these differences; they indicate that the
proxy carries some selection information, even though it does not consistently improve
on validation. In sequential search
the regime dependence is at its starkest: on Lipophilicity mismatch all six
signal-augmented arms beat training-MSE-only search after correction, while on
Hydrophobic Core none do and one is significantly worse. The sign flip therefore
survives the removal of validation and cannot be explained as an interaction between
the signal and the validation set.

\paragraph{Shift severity.} Table~\ref{tab:app-sev} increases feature extrapolation
continuously with a severity parameter $\alpha$ while holding validation
in-distribution, over ten synthetic conditions on two molecular datasets. Validation
rank transfer degrades with $\alpha$ as intended, but no crossover appears at which the
proxy reliably overtakes it. The real biological shifts also extend past the severity
the synthetic sweep reaches: Hydrophobic Core has essentially zero rank transfer and the
proxy still fails there. The useful negative statement is that severity of validation
failure does not predict when geometry helps.

\paragraph{Hard-example negative control.} To separate ``deployment is harder'' from
``deployment prefers a different model'', we built an adversarial split from the
held-out-compound condition using stored per-example predictions: examples all trained
models found difficult were moved toward deployment and easier ones toward validation,
with the explicit intent of manufacturing validation failure. It did the opposite. Rank
transfer rose to $+0.83$ and $+0.71$ on two replications, higher than the unmodified
condition's $+0.19$. All candidates find broadly the same examples hard, so relocating
them lowers every model's score by a similar amount and leaves the ordering intact - it
even stabilises it, because the easy validation subset is less noisy. This construction
is deliberately circular and cannot support any positive claim about performance; it is
usable only for the negative conclusion it was built to test, namely that prediction
difficulty and model-ranking shift are separate phenomena and only the second breaks
selection.

\begin{table}[htbp]\centering\small\setlength{\tabcolsep}{4pt}
\caption{Ablations, deployment MSE (median over replications, audited pools). \emph{Fixed architecture} pins the network to two hidden layers of width 256; its Proxy and $\lambda_{\max}$ columns select on the signal alone. \emph{Validation-free} folds the validation split back into training, so only train-side selectors are defined, Train replaces Val as the reference, and the Proxy and $\lambda_{\max}$ columns are rank sums with training MSE (Train+Proxy and Train+$\lambda_{\max}$).}
\label{tab:app-abl}
\begin{tabular}{lllrrrr}\toprule
ablation & condition & reference & ref. & Proxy & $\lambda_{\max}$ & Random \\\midrule
fixed arch. & Lipo (mismatch) & Val & 0.510 & 0.512 & 0.543 & 0.533 \\
fixed arch. & Hydrophobic Core & Val & 23.245 & 22.565 & 25.532 & 22.654 \\
validation-free & Caco2 (scaffold) & Train & 0.476 & 0.459 & 0.482 & 0.496 \\
validation-free & Lipo (scaffold) & Train & 0.543 & 0.507 & 0.528 & 0.540 \\
validation-free & Amylase & Train & 3.182 & 3.015 & 3.117 & 2.847 \\
validation-free & Hydrophobic Core & Train & 22.871 & 22.837 & 23.121 & 23.272 \\
\bottomrule\end{tabular}\end{table}

\begin{table}[htbp]\centering\small\setlength{\tabcolsep}{4pt}
\caption{Synthetic shift-severity sweep. $\alpha=0$ is an IID split and $\alpha=1$ is pure feature extrapolation; validation is held in-distribution throughout. $\rho_{\text{val}}$ degrades with severity, but no threshold appears at which the proxy reliably overtakes validation. Audited pools.}
\label{tab:app-sev}
\begin{tabular}{lrrrr}\toprule
condition & $\rho_{\text{val}}$ & Val & Proxy & Random \\\midrule
Caco2 $\alpha=0.00$ & +0.52 & 0.354 & 0.342 & 0.367 \\
Caco2 $\alpha=0.25$ & +0.60 & 0.339 & 0.347 & 0.380 \\
Caco2 $\alpha=0.50$ & +0.51 & 0.451 & 0.401 & 0.448 \\
Caco2 $\alpha=0.75$ & +0.43 & 0.710 & 0.668 & 0.792 \\
Caco2 $\alpha=1.00$ & +0.40 & 0.834 & 0.755 & 0.913 \\
Lipo $\alpha=0.00$ & +0.74 & 0.400 & 0.407 & 0.435 \\
Lipo $\alpha=0.25$ & +0.77 & 0.384 & 0.381 & 0.421 \\
Lipo $\alpha=0.50$ & +0.79 & 0.436 & 0.406 & 0.439 \\
Lipo $\alpha=0.75$ & +0.66 & 0.463 & 0.473 & 0.503 \\
Lipo $\alpha=1.00$ & +0.64 & 0.593 & 0.564 & 0.593 \\
\bottomrule\end{tabular}\end{table}

\section{Benchmark sanity checks}
\label{app:sanity}

A negative result about model selection is only interesting if the models being
selected among are competent, and if the comparison to published work is like-for-like.

\paragraph{TDC molecular tasks.} Our validation-deployed models are below
the TDC leaderboards on the matched scaffold splits: Caco2-Wang MAE $0.416$ against a
leaderboard best near $0.256$, and Lipophilicity $0.663$ against $0.456$. These are medians over replications;
Table~\ref{tab:all_results} reports means of the same validation-selected models
($0.432$ and $0.656$). We state the gap plainly, and note that it follows from
decisions made to reduce confounding: 2048-bit Morgan fingerprints rather than pretrained or
message-passing representations, a single MLP rather than boosted-tree ensembles, 100
fixed epochs with no early stopping, a fixed 48-point hypercube shared across selectors
rather than a task-tuned search, and no per-task feature engineering - all of which
follow from requiring every selector to rank an identical pool. It does not invalidate a
controlled selection study, because the question is which candidate a criterion picks
from a fixed pool, and the pools demonstrably contain models of meaningfully different
deployment quality.

\paragraph{Training budget.} Every candidate trains for a fixed 100 epochs without early
stopping, so that all selectors rank the same models. We do not claim convergence:
training MSE fell by a median of 13\% between epochs 50 and 100 (interquartile range
2--30\%), and by more than 10\% for 56\% of candidates. The comparisons in this appendix
show that the pools contain competitive models, not that every candidate converged.

\paragraph{FLIP2.} On Hydrophobic Core our one-hot MLP
exceeds the published benchmark baselines, so the extrapolation result cannot be
dismissed as an artifact of a uniformly weak predictor. On Amylase our supervised
performance is poor, which matches what the benchmark itself reports for supervised
models on that split; Appendix~\ref{app:degen} analyses its consequences for selection.

\paragraph{GDSC2.} Our RMSE of $1.535$ in $\ln(\mathrm{IC}_{50})$ units falls inside the
published leave-drug-out range.

\end{document}